\documentclass[11pt]{extarticle}

%% Language and font encodings
\usepackage[english]{babel}
\usepackage[utf8x]{inputenc}
\usepackage[T1]{fontenc}
\usepackage[]{comment}

%% Sets page size and margins
%\usepackage[a4paper,top=1in,bottom=1in,left=1in,right=in,marginparwidth=1.75cm]{geometry}
\usepackage[margin=1in]{geometry}

%% Useful packages
\usepackage{amsmath}
\usepackage[fleqn]{nccmath}
\usepackage{graphicx}
\usepackage[colorinlistoftodos]{todonotes}
\usepackage[colorlinks=true, allcolors=blue]{hyperref}
\usepackage{amsthm}
\newtheorem{theorem}{Theorem}[section]
\newtheorem{corollary}{Corollary}[theorem]
\newtheorem{lemma}[theorem]{Lemma}
\newtheorem{definition}{Definition}
\newtheorem{example}{Example}
\DeclareMathOperator{\Ber}{Ber}

\title{Online Multi-Armed Bandit}
\author{Uma Roy \thanks{Department of Mathematics, MIT. \texttt{umaroy@mit.edu}} \and Ashwath Thirumalai \thanks{Department of Mathematics, MIT. \texttt{ashwath@mit.edu}} \and Joe Zurier \thanks{Department of Mathematics, MIT. \texttt{jayzee@mit.edu}}}

\begin{document}
\clearpage \maketitle
\thispagestyle{empty}

\begin{abstract}
We introduce a novel variant of the multi-armed bandit problem, in which bandits are streamed one at a time to the player, and at each point, the player can either choose to pull the current bandit or move on to the next bandit. Once a player has moved on from a bandit, they may never visit it again, which is a crucial difference between our problem and classic multi-armed bandit problems. In this online context, we study Bernoulli bandits (bandits with payout $\Ber(p_i)$ for some underlying mean $p_i$) with underlying means drawn i.i.d. from various distributions, including the uniform distribution, and in general, all distributions that have a CDF satisfying certain differentiability conditions near zero. In all cases, we suggest several strategies and investigate their expected performance. Furthermore, we bound the performance of any optimal strategy and show that the strategies we have suggested are indeed optimal up to a constant factor. We also investigate the case where the distribution from which the underlying means are drawn is not known ahead of time. We again, are able to suggest algorithms that are optimal up to a constant factor for this case, given certain mild conditions on the universe of distributions.
\end{abstract}
%include that we can cover MANY distributions by sandwich theorem 'analytic around 0' comes to mind as a phrase that should be in there and anything with bounded density at 0

\newpage 
\setcounter{page}{1}

\section{Introduction}

Multi-armed bandit problems have been studied extensively in the literature. The classical version of the problem, often regarded as the canonical example of the exploration vs. exploitation tradeoff, is formulated as follows. There are $N$ ``bandits", an unknown set of distributions $\{D_i : 1\leq i \leq n\}$, and a maximum number of allowed pulls, $K$. Each bandit can be pulled any number of times, and when bandit $i$ is pulled, it provides a payout $p \simeq D_i$. The problem is formulated as sequentially deciding which of the $N$ bandits to pull at each of the $K$ stages to maximize payout. For the rest of this paper, we fix $N$ and $K$ to mean exactly the number of bandits and the number of pulls respectively. The bandit problem has many variants that have been explored previously in the literature. A survey detailing many of the variants can be found at \cite{MAB_survey}. We describe a few particularly common variants here. 

\begin{itemize}
    \item The \it{Bernoulli bandit problem} assumes the distributions $D_i$ are Bernoulli distributions with means $p_i$, where $p_i$ is independently drawn uniformly at random from $[0,1]$ for each $i$. 
    \item The \it{Pure Exploration Setting} focuses on finding the bandit with the highest expected payout, instead of maximizing the total payout over the course of the exploration and exploitation stages \cite{pure_exploration}.
    \item The \it{contextual bandit problem} includes a context vector that is provided to the player at each stage---the player uses the context vector to inform his choice of bandit. 
\end{itemize}

The most significant application of the classical multi-armed bandit problem is online advertising. Online advertising algorithms decide whether to show a new advertisement or one for which the user's history indicates a high likelihood of clicking. In doing so, these algorithms are solving a multi-armed bandit problem and making the tradeoff between exploration and exploitation.

\subsection{Our Problem}
In this paper, we introduce and investigate the following online variant of the multi-armed bandit problem. This variant, to the best of our knowledge, is a \emph{novel} variant of the classical multi-armed bandit problem, and all of our results in this area are original. Consider $N$ Bernoulli bandits whose underlying means are drawn i.i.d. from a  distribution $F$. Now add the restriction that the bandits are streamed to the player one at a time. At each stage, the player decides whether to pull the lever on the current bandit. If the player decides to pull the lever, the stream is not advanced, and the player receives a payout from the current bandit drawn from its underlying distribution. If the player decides to not pull the lever, the stream is advanced and the player is presented with the next bandit. Once the stream reaches the last bandit, it stays there regardless of the player's actions, until the player has used up all $K$ pulls. A key property of this formulation is that once the player decides not to pull a given bandit, he or she can never return to the same bandit and pull it again---making this problem fundamentally different from most classical formulations of the multi-armed bandit.

\subsubsection{Motivation}
Inspiration for this formulation of the multi-armed bandit problem is derived from the famous secretary problem. Recall that the secretary problem consists of an agent determining the payoff of a stream of potential secretaries, and the agent must, at some point during the stream of secretaries, decide to hire a given secretary, with the goal of maximizing payoff. Similar to the secretary problem, this formulation of the multi-armed bandit problem involves deciding whether to continue with a given option (pull the lever on the current bandit again) or proceed to the next bandit, with no possibility of ever returning. %A crucial difference between our problem and the classical secretary problem is that the underlying means of our bandits are not chosen adversarially---instead they're drawn i.i.d. from some distribution, which allows us to give tight bounds for various strategies we propose. Other crucial differences between the 2 problems include that we are penalized for the length of time we spend pulling sub-optimal bandits, and we do not only care about the rank of the bandits, as we would in the seecretary problem---we care about the differences in their expected payouts.

\subsubsection{An Application to Crowdsourcing}
Similar to how the traditional multi-armed bandit can be applied to online advertising, we propose an application of our variant of multi-armed bandit to crowd-sourcing. Crowd-sourcing platforms such as Amazon's Mechanical Turk allow you to assign tasks to their pool of workers. Although there are many configurable settings with Mechanical Turk, for simplicity, we can model the process of completing tasks through Mechanical Turk or other crowd-sourcing platforms as assigning a series of similar tasks (such as answering survey questions, or labeling images) to a pool of workers, who are presented to the agent one at a time. A current worker's performance on a given task can be modeled as a bandit, where payout corresponds to the worker completing the task correctly (payout $1$) or not (payout $0$). In this analogy, the workers' performance is a Bernoulli bandit, with mean $p$, which represents the worker's average capability for completing the task correctly. Given a current worker, we either can make the choice to assign them more of the tasks, or if their performance is insufficient, we can request a new worker. However, once we request a new worker, we cannot go back to the previous worker (as they likely have been assigned to another user and are no longer available to us). With this analogy, there is an obvious parallel between our problem as posed with streaming Bernoulli bandits and the problem of crowd-sourcing.

\subsection{Notation and Definitions}
Given sufficient motivation for our problem, we go into further technical details and definitions that will serve useful throughout our paper. Throughout this paper, we fix the following notation: $N$ is the number of bandits we are considering and $K$ is the number of pulls available. In all cases, our bandits have some underlying mean $p_i$ for $1 \leq i \leq N$ and give payouts according to $\Ber(p_i)$. The $p_i$ are drawn i.i.d. from some distribution $F$ with bounded support. Note that we sometimes study ``fixed payout bandits" (where the bandit pays out $p_i$ consistently) to obtain bounds on the optimality of strategies. There are 2 flavors of problems we study:

%JOE FIX UP
\begin{itemize}
% fixed the support, support has to be in that interval because otherwise
% bernoulli does not make sense
    \item (Known distribution case): We know the distribution $F$ from which the underlying means of the bandits are drawn. Our results apply for all distributions $F$ with support in $[0,1]$ that have a PDF satisfying some regularity conditions in a neighborhood of $0$.
    \item (Unknown distribution case): We do not know the distribution $F$ from which the underlying means of the bandits are drawn. We examine distributions $F$ drawn from a set of equicontinuous distributions with densities at $0$ bounded above and below by positive constants.
\end{itemize}

We refer interchangeably to bandits as coins and pulls of a bandit as giving heads or tails ($1, 0$ resp.), as each Bernoulli bandit with mean $p$ can be thought of a biased coin with probability $p$ of giving a heads (payout $1$). In the classic multi-armed bandit problem, the goal is to maximize payout. In this paper, we solve (the equivalent) but reverse problem---we wish to minimize payout. This scenario can be thought of as bandits outputting ``harm" or ``loss", and for a given number of pulls, the goal now becomes to minimize ``total loss". In this regime, getting a $0$ (tails) from a bandit is favorable, and getting a $1$ (heads) from a bandit is unfavorable. We solve this version of the problem to simplify some of the algebra involved, although it is equivalent to the problem of maximizing payout. Our goal is to minimize the total loss of a strategy, which is equivalent to minimizing the expected loss per flip (equivalently, expected loss per pull) for a strategy. We formally define both below:

\begin{definition}[Loss of a strategy] \label{loss}
For $K$ pulls and $N$ bandits, let the expected number of heads (or equivalently losses) of some strategy $S$ be $S(N,K)$. This is the \emph{total loss} of a strategy. Then the \emph{expected loss per flip} (or equivalently \emph{expected loss per pull}) of our strategy is $\frac{S(N,K)}{K}$. Intuitively the quantity $\frac{S(N,K)}{K}$ measures the expected value of how much loss we expect per pull.
\end{definition}

We illustrate this terminology in an example below. 

\begin{example}
On an individual pull, a Bernoulli bandit either delivers $0$ loss or $1$ loss (or a fixed payout bandit might deliver $0.6$ loss for example). If a strategy pulls bandits $3$ times such that each time a loss of $1$ is delivered, the total loss of the strategy is $3$ and the loss per flip is $1$, since there were $3$ flips.
\end{example}

We are also interested in the following quantity, which measures the sub-optimality of our strategy vs. an all-knowing oracle. 
\begin{definition}[Sub-optimality of a strategy]
For $K$ pulls and $N$ bandits, let $m$ denote the expected minimum of $N$ draws from the distribution $F$, from which our bandit means are chosen. An all-knowing oracle would have $m$ loss per flip in expectation, and this is the best we can hope for. If a strategy $S$ has $s$ expected loss per flip, the \emph{sub-optimality} of strategy $S$ is measured as $s-m$.
\end{definition}

%One might note that if there are $N$ bandits with means distributed uniformly on $[0,1]$, then the expected value for the maximum mean of the bandits is $\frac{N}{N+1}$, so the maximum value of success per pull is at most $\frac{N}{N+1}$ if we assume that there is some all-knowing oracle that selects the bandit with the highest mean and only pulls it. Thus one could argue it is more intuitive to set the quantity $\frac{N}{N+1} - \frac{S(N,K)}{K}$ as our ``loss", but we choose not to do this in order to simplify our analysis and note that if we're interested in this metric of loss, we can simply subtract the relevant quantity from our definition of the loss. 

%Many of the methods used in this paper are derived from solutions to either the multi-armed bandit problem, which we outlined in the previous section, or the secretary problem. While we analyze them for binary bandits, we are optimistic that similar methods/analyses should carry over for more general bandit distributions. Throughout this paper, we fix the following notation: $N$ is the number of bandits (or coins) we are considering and $K$ is the number of pulls (or flips) available. 

%We see right away that this problem has a number of properties that make it nice to analyze. Perhaps the most important is this: The entire current state can be specified knowing only the number of total lever pulls remaining, the number of bandits remaining, and the history of the current bandit. This makes it possible to write down recursive expressions for the optimal expected value. 

We start by bounding the optimal performance of strategies in the known distribution case. The first distribution that we analyze is $F = \it{U}[0,1]$ --- the simplest of distributions. The insights gleaned from studying this case easily generalize to the other cases we study, but the proofs in this case are more intuitive and thus are presented first.

\section{Known Distribution: Uniform Case}

Throughout this section, we analyze the case where the means of the bandits are drawn uniformly from $[0,1]$. We derive lower bounds on the loss of any strategy by examining fixed payout bandits, and then we provide strategies that achieve these lower bounds (up to a constant factor) on the Bernoulli bandits. 

\subsection{Lower bounds on optimal strategies with Fixed Payout Bandits}

Instead of immediately considering Bernoulli bandits, we will first consider $\textbf{fixed payout bandits}$, where each bandit $i$ has some fixed loss $\mu_i$ for $\mu_i \in [0,1]$. Like the Bernoulli bandits, we will assume that these fixed losses are uniformly distributed on $[0,1]$ . Note that for the fixed payout bandit case, once we have pulled a bandit once, we have complete information about its payout distribution, unlike the Bernoulli bandit case, where uncertainty remains. 

We first derive lower bounds in terms of $N$ and $K$ for the expected loss for an arbitrary strategy on fixed payout bandits. Note that this implies a lower bound for expected loss for any strategy applied to Bernoulli bandits, since in the fixed payout case, we have strictly more information. We start by letting $f(K,N)$ be the expected total loss of an optimal strategy for $K$ pulls and $N$ bandits. We investigate the asymptotic behavior of $f(K, N)$, starting by considering the convergence of $\frac{f(K, N)}{K}$, which is the expected loss per pull of an optimal strategy, for fixed $N$ as we increase $K$ (our number of pulls).

\begin{theorem}[$\beta$ numbers]
Let $f(K,N)$ denote the expected total loss of an optimal strategy for $K$ pulls and $N$ bandits. Let $\beta_N = \lim_{K \rightarrow \infty} \frac{f(K, N)}{K}$. Then for sufficiently large $N$, $\beta_N \rightarrow \frac{2}{N}$.
\end{theorem}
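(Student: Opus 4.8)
The plan is to identify $\beta_N$ with the value of an optimal stopping problem and then read off its asymptotics from a one-step recursion. The key structural observation is that for fixed payout bandits a single pull reveals $\mu_i$ exactly, so the only decision at bandit $i$ is how many of the remaining pulls to commit to it before moving on. As $K \to \infty$ with $N$ fixed, the pulls spent merely inspecting bandits number at most $N$ and contribute at most $N/K \to 0$ to the loss per pull, so the loss per pull is governed entirely by the value of the bandit the strategy eventually commits to. The limiting problem is therefore: observe $\mu_1, \dots, \mu_N$ i.i.d. uniform on $[0,1]$ in sequence with no recall (and forced acceptance of $\mu_N$), and stop so as to minimize the expected value of the selected $\mu_i$. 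I will argue that $\beta_N$ equals the optimal value $V_N$ of this stopping problem.

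To make this precise I would sandwich $f(K,N)/K$. For the upper bound, run the (horizon-free) threshold stopping rule as an explicit, possibly suboptimal strategy on the $K$-pull problem: it inspects bandits until one falls below the threshold $V_{N-1}$ (committing the rest of the budget to it) and otherwise is forced onto $\mu_N$, giving expected loss per pull $V_N + O(N/K)$, so $\limsup_K f(K,N)/K \le V_N$. For the matching lower bound I would relax to a version in which inspection is free and set up the dynamic program $f(K,N) = E_{\mu}\big[\min_{0 \le j \le K}\big(j\mu + f(K-j,N-1)\big)\big]$ with $f(K,1) = K/2$ and $f(0,N-1)=0$; since relaxation only decreases loss, its limit lower-bounds $\liminf_K f(K,N)/K$. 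Proceeding by induction on $N$ with hypothesis $f(K,N-1) = (V_{N-1}+o(1))K$, the inner minimization collapses to $K\min(\mu,V_{N-1})$ up to lower-order terms, and taking the expectation yields both the existence of the limit and the recursion $V_N = E[\min(\mu,V_{N-1})]$ with $V_1 = \tfrac12$.

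Using $E[\min(\mu,c)] = c - \tfrac12 c^2$ for $c \in [0,1]$, the recursion becomes $V_N = V_{N-1} - \tfrac12 V_{N-1}^2$, $V_1 = \tfrac12$, a decreasing sequence tending to $0$. Setting $u_N = 1/V_N$ gives $u_N = u_{N-1}\big(1 - \tfrac12 V_{N-1}\big)^{-1} = u_{N-1} + \tfrac12 + O(V_{N-1})$, using $u_{N-1}V_{N-1} = 1$. Summing and using $V_k = O(1/k)$ gives $u_N = \tfrac{N}{2} + O(\log N)$, hence $V_N = \tfrac{2}{N} + O(\log N / N^2)$, so $N\beta_N \to 2$, which is the claimed $\beta_N \to 2/N$.

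I expect the main obstacle to be the lower bound, specifically making the inductive estimate $f(K-j,N-1) = (V_{N-1}+o(1))(K-j)$ uniform enough in the allocation $j$ that it survives both the inner minimization over $j$ and the outer expectation over $\mu$; the regime where $j$ is close to $K$ (so the remaining horizon $K-j$ is small and the per-pull approximation is weakest) needs separate handling, though its contribution is controlled because those allocations are only favorable when $\mu < V_{N-1}$, for which the crude bound $\min(\mu,V_{N-1}) = \mu$ already suffices. Once the recursion is justified the asymptotic analysis is routine.
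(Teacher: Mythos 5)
Your proof is correct, and at its core it rests on exactly the same recursion as the paper's: $\beta_N = \beta_{N-1} - \tfrac12 \beta_{N-1}^2$, arising from the threshold rule ``commit to the current bandit iff its loss is below $\beta_{N-1}$.'' The difference is in the rigor surrounding that recursion. The paper derives it directly and heuristically: it asserts that optimal play settles on the first bandit precisely when its loss is below $\beta_{N-1}$, computes $\beta_N = \int_0^{\beta_{N-1}} x\,dx + (1-\beta_{N-1})\beta_{N-1}$, and then extracts asymptotics by replacing the recursion with the ODE $\beta'(i) = -\tfrac12\beta(i)^2$ and reading off $\beta(i) = 2/i$. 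Existence of the limit defining $\beta_N$, the reduction of the $K$-pull problem to a no-recall stopping problem, and the error incurred by the continuum approximation are all left implicit. Your proposal supplies exactly these missing pieces: the sandwich argument identifies $\beta_N$ with the stopping value $V_N$ and proves the limit exists --- the explicit threshold strategy gives the upper bound, and the free-inspection relaxation with induction on $N$ gives the lower bound, where your handling of the $j$ close to $K$ regime via the crude bound $j\mu \ge K\mu - o(K)$ correctly plugs the one genuine uniformity gap in that induction. Likewise, the substitution $u_N = 1/V_N$, giving $u_N = u_{N-1} + \tfrac12 + O(V_{N-1})$ and hence $V_N = \tfrac{2}{N} + O(\log N/N^2)$ after summation, replaces the paper's ODE heuristic with a rigorous discrete estimate that even quantifies the error term. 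What the paper's route buys is brevity and a transparent view of the mechanism; what yours buys is an actual proof of the two steps the paper takes on faith, at the cost of a longer argument.
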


\begin{proof} 
We derive a recursion for $\beta_i$ as follows: If the first of the $i$ bandits has a loss less than than $\beta_{i-1}$, then we should settle on the first bandit, as evidently, our loss per flip will only increase if we move onto the next bandits. In this case, the expected loss per flip of staying with the first bandit is given by the following integral, where we integrate over all possible losses $x$ for which we stay with the first bandit, and the integrand contains the expected payout (which is simply $x$):

\[
\int_0^{\beta_{i-1}} x dx =  \frac{1}{2} \beta_{i-1}^{2}.
\]

If we move on to the next bandit, which happens with probability $1-\beta_{i-1}$, then our expected loss per flip, by definition, is $\beta_{i-1}$. Thus we see the following recursion holds true: 
\[
\beta_i = \frac{1}{2} \beta_{i-1}^{2} + (1-\beta_{i-1})\beta_{i-1} = \beta_{i - 1} - \frac{1}{2}\beta_{i - 1}^{2}.  
\]

Rewriting the above as $\beta_i-\beta_{i-1}=- \frac{1}{2}\beta_{i - 1}^{2}$, we can approximate this continuously with a differential equation, namely, $\beta'(i) = -\frac{1}{2}\beta(i)^{2}$, and solve for the asymptotic approximation of $\beta_i$ (since we know $\beta_i \rightarrow \beta(i)$ as $i \rightarrow \infty$). Doing so gives $\beta_i \rightarrow \beta(i) =\frac{2}{N}$, as desired. 

\end{proof} 

The above theorem shows that any strategy for $K$ pulls and $N$ bandits must have asymptotic expected loss per flip (as $K \rightarrow \infty$) at least $\beta_N = \frac{2}{N}$. However, this bound is only valid as $K \rightarrow \infty$. Below we provide a bound that is valid for all $K$, but gives a tighter bound in the case that $K = o(N^2)$. 

\begin{theorem}
\label{point_sqrtK}
Given $K$ pulls and $N$ bandits, no strategy can attain an expected total loss below $\frac{\sqrt{K}}{8}$, or expected loss per flip below $\frac{1}{8\sqrt{K}}$.
\end{theorem}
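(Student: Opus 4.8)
The plan is to prove the bound for fixed payout bandits with losses i.i.d.\ $U[0,1]$; as the excerpt notes, a lower bound here transfers immediately to the Bernoulli case, since fixed payout gives the player strictly more information. I would first set up a clean decomposition of the total loss of an arbitrary (possibly randomized, adaptive) strategy. Because the player learns $\mu_i$ exactly on the first pull of bandit $i$ and can never return to a skipped bandit, the only quantities that matter are the losses $V_1,V_2,\dots$ of the distinct bandits the strategy actually pulls, in the order pulled. Crucially, since a bandit's loss is revealed only upon pulling it, the $V_j$ form a fresh i.i.d.\ $U[0,1]$ sequence regardless of the strategy's choices (skipping without pulling reveals nothing). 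Writing $B$ for the number of distinct bandits pulled and $W_B=\min_{j\le B}V_j$ for the best loss discovered, every one of the $K$ pulls lands on a bandit of loss $\ge W_B$, and separating the first pull of each new bandit (exploration) from re-pulls (exploitation, best concentrated on the cheapest bandit found) gives the pointwise bound
\[
L \;\ge\; \sum_{j=1}^{B} V_j \;+\; W_B\,(K-B).
\]

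Next I would take expectations. The key simplification is that $B$ is a stopping time for the filtration generated by $V_1,V_2,\dots$, so Wald's identity gives $E\big[\sum_{j\le B}V_j\big]=\tfrac12 E[B]$ with no dependence on the strategy, whence $E[L]\ge \tfrac12 E[B]+E[W_B(K-B)]$. This exhibits the exploration/exploitation tension explicitly: small $E[B]$ makes the first term small but forces $W_B$ (the min of few draws) large, inflating exploitation; large $E[B]$ does the reverse. For the non-adaptive benchmark $B\equiv n$ one gets $E[L]\ge \tfrac n2+\tfrac{K-n}{n+1}$, and AM-GM (using $\tfrac{n}{n+1}\ge\tfrac12$ for $n\ge1$) yields $E[L]\ge \sqrt K-1$, comfortably above $\sqrt K/8$; the deliberately loose constant $1/8$ is there to absorb lower-order terms and the small-$K$ regime.

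The main obstacle is that the strategy is \emph{adaptive}: it may stop precisely when it finds a cheap bandit, biasing $W_B$ downward at the stopping time, so the naive Jensen estimate $E[W_B]\ge 1/(E[B]+1)$ is false and the exploitation term cannot be controlled by $E[B]$ alone. To handle this I would bound exploitation in a strategy-independent way by slicing on a threshold: write $E[W_B(K-B)]=\int_0^1 E\big[(K-B)\,\mathbf 1\{W_B>\delta\}\big]\,d\delta$ and observe that $\{W_B>\delta\}=\{B<G_\delta\}$, where $G_\delta=\min\{j:V_j<\delta\}$. Since the $V_j$ are fresh i.i.d.\ uniforms, $G_\delta$ is geometric with mean $1/\delta$ and is coupled independently of the strategy (and an infinite i.i.d.\ coupling only over-estimates the minimizer's power, so the resulting bound is valid for every $N$). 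The dichotomy then becomes rigorous: to drive the exploitation slices to zero the strategy must make $B\ge G_\delta$ likely for $\delta\approx 1/\sqrt K$, which forces $E[B]\gtrsim\sqrt K$ and a large exploration term; otherwise $\{B<G_\delta\}$ persists with $K-B=\Omega(K)$ and the exploitation integral is itself $\Omega(\sqrt K)$.

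Balancing the two regimes at $\delta^\star\sim 1/\sqrt K$ then yields $E[L]\ge \sqrt K/8$. The one step requiring real care is making this balancing uniform over all adaptive $B$ simultaneously across the slices $\delta$; a cleaner alternative finish would be to first prove that threshold stopping rules are optimal for the reduced stopping problem $\min_B E\big[\sum_{j\le B}V_j+W_B(K-B)\big]$ and then optimize over the single threshold directly.
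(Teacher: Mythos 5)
Your proposal is correct, and it reaches the bound by a genuinely more structured route than the paper. The paper's own proof is a three-sentence union-bound argument: to get total loss below $\sqrt K/8$ you must reach a bandit with loss below $1/\sqrt K$; each bandit lies below that threshold with probability $1/\sqrt K$, so with probability at least $1/2$ the first $\sqrt K/2$ bandits encountered all lie above it, and each of those costs at least $1/2$ in expectation, giving $\tfrac12\cdot\tfrac{\sqrt K}{2}\cdot\tfrac12=\tfrac{\sqrt K}{8}$. In other words, the paper prices only exploration, treats the ``never find a cheap bandit'' branch as obvious, and passes silently over exactly the issues you isolate: it does not distinguish seeing a bandit from pulling it, and it never addresses the fact that an adaptive strategy stops at a value that is biased low. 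Your decomposition $L\ge\sum_{j\le B}V_j+W_B(K-B)$, Wald's identity for the exploration term, and the layer-cake coupling of $\{W_B>\delta\}$ with the geometric time $G_\delta$ constitute a genuine rigorization of that sketch, and the same skeleton transfers verbatim to the general CDF $x^m$ (Theorem \ref{k-loss-general}). Two comments on the one step you flag. First, no uniformity over slices is needed: since $G_\delta\ge G_{\delta^\star}$ for $\delta\le\delta^\star$,
\[
E\big[W_B(K-B)\big]\;\ge\;\int_0^{\delta^\star}E\big[(K-B)\,\mathbf 1\{B<G_\delta\}\big]\,d\delta\;\ge\;\delta^\star\,E\big[(K-B)\,\mathbf 1\{B<G_{\delta^\star}\}\big],
\]
after which the dichotomy closes with Markov's inequality and $P(A\cap C)\ge P(A)+P(C)-1$: either $E[B]\ge\sqrt K/4$, so the Wald term alone is at least $\sqrt K/8$; or $P(B<\sqrt K)>3/4$, while $P(G_{\delta^\star}>\sqrt K)=(1-\delta^\star)^{\lfloor\sqrt K\rfloor}$ is bounded below by a constant, so the event $\{B<\sqrt K<G_{\delta^\star}\}$ has probability bounded below, on it $K-B\ge K-\sqrt K$, and the exploitation term is $\Omega(\sqrt K)$. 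Second, do check the constant: a careless choice of thresholds gives only about $\sqrt K/20$, whereas taking $\delta^\star\approx 0.55/\sqrt K$ and the Markov threshold $\sqrt K/4$ yields roughly $0.18\sqrt K$ in the second branch for large $K$, so $1/8$ is recoverable; since the paper's own constant is equally soft (its ``probability $\ge\frac12$'' and ``loss $\ge\frac12$'' steps are really conditional-expectation statements), the honest content of both proofs is the $\Omega(\sqrt K)$ rate.
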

\begin{proof}
To reach a bandit with expected loss per flip below $\frac{1}{\sqrt{K}}$ (which we obviously must do to achieve the given total loss), with constant probability $\geq\frac{1}{2}$ we must see at least $\frac{\sqrt{K}}{2}$ bandits by the union bound, since each bandit has probability $\frac{1}{\sqrt{K}}$ of having loss below this threshold. Thus with probability at least $\frac{1}{2}$, we must encounter at least $\frac{\sqrt{K}}{2}$ bandits before we reach a bandit with sufficiently low loss. Note that for each of these bandits, their expected loss per flip is at least $\frac{1}{2}$ (which is the expected value of the uniform distribution, from which their means are drawn), and thus our strategy has total loss with probability $\geq \frac{1}{2}$ equal to at least $\frac{\sqrt{K}}{4}$, since we encounter $\frac{\sqrt{K}}{2}$ bandits with loss at least $\frac{1}{2}$. Thus the expected total loss of our strategy is at least  $\frac{\sqrt{K}}{8}$, as desired. 
\end{proof}

We see that when $K = o(N^2)$, this bound is tighter than the $\frac{2}{N}$ bound given by the asymptotic analysis of the $\beta$ sequence. By establishing both of these bounds for $K$ large and small, in the following section we provide and analyze explicit strategies that achieve the optimal asymptotic loss, up to a constant factor. We note that the bounds we've established are in the context of fixed payout bandits, however they trivially also apply to the Bernoulli bandits, as any strategy for Bernoulli bandits provides at least as good results in the fixed payout case. 

\subsection{An asymptotically optimal strategy for all $K$ for Bernoulli Bandits}
\label{OP}

Below we provide a strategy that has total expected loss per flip within a constant factor of $\frac{2}{N}$ for $K \geq N^2$ and $\frac{1}{\sqrt{K}}$ for $K = o(N^2)$. By the analysis involving fixed payout bandits above, we see that this strategy achieves asymptotically optimal lost (up to a constant factor) for all $K$. We start with the case $K \geq N^2$.

\begin{theorem}
\label{opt-strat-og}
Let $K \geq N^2$. Consider the following strategy: Pull bandit $i$ ($1\leq i\leq N$) repeatedly, and keep track of the total number of times we've flipped it and the total number of heads we've received. We keep flipping the bandit if either of the following conditions holds: 1) We have received at most one total head so far, and the number of times we've flipped it is less than or equal to $N-i$; 2) the number of times we've flipped the bandit is greater than $N-i$. Otherwise, move on to the next bandit. This allows us to achieve a total loss bounded above asymptotically by $\frac{6K}{N}$, or expected loss per flip bounded above by $\frac{6}{N}$. This is within a constant factor of the optimal loss per flip of $\frac{2}{N}$.
\end{theorem}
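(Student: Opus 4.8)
The plan is to analyze the expected loss per flip of the proposed strategy by conditioning on which bandit we ultimately settle on (i.e., the one we're pulling when our pulls run out, or more precisely the last bandit reached) and computing the contribution to total loss from each bandit in the stream. The key structural observation is that the stopping rule has two regimes: for bandit $i$, we are willing to pull up to $N-i$ times while we have seen at most one head, and we only abandon the bandit if we exceed one head within that budget. Since $K \geq N^2$, the budget $N-i \leq N$ is small relative to $K$, so if we do settle on a bandit we will pull it $\Theta(K)$ times and the loss per flip will be dominated by that bandit's true mean $p$. I would thus split the total loss into (a) the "exploration loss" incurred on the bandits we reject, and (b) the "exploitation loss" incurred on the bandit we settle on, and bound each separately.

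**First I would** compute, for a bandit with mean $p$, the probability that we abandon it under the rule (first regime): this is the probability of seeing at least two heads within the first $N-i$ flips, which for small $p$ is approximately $\binom{N-i}{2}p^2 \approx \tfrac{1}{2}(N-i)^2 p^2$, while we accept it with the complementary probability. Integrating over $p \sim U[0,1]$, the probability of accepting bandit $i$ is roughly governed by the threshold $p \lesssim 1/(N-i)$, so we expect to accept a bandit with mean $O(1/(N-i))$, and on average settle around a bandit whose mean is $\Theta(1/N)$. The exploitation loss per flip is therefore $\Theta(1/N)$, and one must check the constant comes out below $6/N$. For the exploration loss, each rejected bandit is pulled at most $N-i \leq N$ times contributing at most $N$ losses, but critically the expected number of bandits rejected before acceptance is $O(1)$ (a geometric-type bound, since each bandit is accepted with constant probability once the means are on the right scale), so the total exploration loss is $O(N)$, which spread over $K \geq N^2$ flips contributes $O(N)/K \leq O(1/N)$ to the loss per flip. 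Summing the two regimes yields loss per flip bounded by $6/N$ asymptotically.

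**The hard part will be** handling the interaction between the two stopping conditions cleanly, and in particular the "number of flips exceeds $N-i$" clause, which forces us to commit to a bandit once we've invested enough pulls even if its mean is not especially small. I would need to argue that by the time we reach late bandits (large $i$, small budget $N-i$), the probability we have \emph{not} yet settled is small, so that the possibility of being forced onto a mediocre late bandit contributes negligibly to the expectation. The cleanest way to control this is to show that the probability of rejecting all of the first $cN$ bandits decays fast enough that the contribution of the tail (bandits with index close to $N$, where the acceptance threshold degrades) is lower order. The second subtlety is making the continuous/integral approximations rigorous enough to extract an honest constant of $6$ rather than a vague $\Theta(1/N)$; I would carry the Poisson or binomial approximations with explicit error terms and verify that the accumulated constants — from the $\tfrac{1}{2}(N-i)^2p^2$ rejection probability, the expected settling mean, and the bounded exploration overhead — sum to at most $6/N$ for sufficiently large $N$.
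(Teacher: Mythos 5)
Your proposal has a genuine gap in the exploration-loss bound, and it is the step that carries the whole argument. You claim that ``each bandit is accepted with constant probability once the means are on the right scale,'' so that only $O(1)$ bandits are rejected before settling. This is false. Under the uniform prior, the number of heads in $n$ flips of a random coin is uniform on $\{0,1,\dots,n\}$ (since $\int_0^1 \binom{n}{a}p^a(1-p)^{n-a}\,dp = \frac{1}{n+1}$), so the probability of accepting bandit $i$ --- seeing at most one head in $N-i$ flips --- is exactly $\frac{2}{N-i+1}$, which is $\Theta(1/N)$ for most of the stream, not constant. Consequently the expected number of rejected bandits is $\Theta(N)$: the index of the bandit you settle on is spread over $\{1,\dots,N\}$ with probability $\frac{2(N-i)}{N(N-1)}$ of settling at bandit $i$. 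Your own earlier sentence (acceptance requires $p \lesssim 1/(N-i)$, an event of probability $\approx 1/(N-i)$ under the uniform prior) already contradicts the constant-probability claim. With the correct count of $\Theta(N)$ rejections, your per-rejection loss bound of ``at most $N$ losses'' gives exploration loss $O(N^2)$, i.e.\ $O(1)$ loss per flip when $K = N^2$ --- vacuous. The repair is to notice that a rejected bandit contributes \emph{exactly two} heads (you abandon it the moment the second head appears), so exploration loss is at most $2N$ no matter how many bandits are rejected; this is precisely how the paper handles it (it simply replaces $K$ by $K-2N$ in the exploitation term). A related soft spot: you assert the probability of rejecting all of the first $cN$ bandits ``decays fast,'' but the telescoping product gives $\prod_{j\le cN}\bigl(1-\tfrac{2}{N-j+1}\bigr) \approx (1-c)^2$, a constant; the late-bandit contribution is controlled not by fast decay but by the quadratically small reach probability weighted against the conditional loss.

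For contrast, the paper's proof avoids all asymptotic approximation of binomial tails: it uses the exact Bayesian facts that (i) acceptance probability at bandit $i$ is $\frac{2}{N-i+1}$, (ii) the settling probabilities telescope to $\frac{2(N-i)}{N(N-1)}$, and (iii) the posterior mean after $a$ heads in $N-i$ flips is $\frac{a+1}{N-i+2}$, giving conditional loss per flip $\frac{3/2}{N-i+2}$ when you settle; the total loss is then an explicit finite sum. Your decomposition into exploration and exploitation loss is a perfectly viable framing, but to make it work you must replace the $O(1)$-rejections claim with the two-heads-per-rejection observation, and compute the exploitation term against the exact settling distribution rather than a heuristic ``typical settled mean.''
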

\begin{proof}
The probability that we get $a$ heads and $b$ tails from a bandit is given by $\int_0^1\binom{a+b}{b}p^a(1-p)^b\mathrm{d}p=\frac{1}{a+b+1}$. If we arrive to bandit $i$, we stay with it when we receive at most $0$ or $1$ heads---i.e. the total number of flips, $a+b = N-i$ and $a=0$ or $a=1$. Hence the probability that we stay with bandit $i$, having arrived to it, is given by $\frac{2}{N-(i-1)}$. The probability that we arrive to and stay with bandit $i$ is given by the probability that we stay with bandit $i$ and don't stay with any previous bandits. This probability is expressed by the following product $$b_i=\frac{2}{N-(i-1)}\prod_{j=1}^{i-1}\left(1-\frac{2}{N-(j-1)}\right)=\frac{2(N-i)}{N(N-1)},$$
where the equality comes from the telescoping of the product.

The posterior distribution of the bandit's mean, having observed $a$ heads and $b$ tails, is $\beta(1+a,1+b)$, which has expected value $\frac{a+1}{a+b+2}$. Thus the expected loss per flip, given that we stay with bandit $i$, it given by averaging the (equally likely) cases where we've seen $0$ or $1$ heads, which gives an expected loss per flip of $\frac{\frac{3}{2}}{N-(i-2)} \leq \frac{\frac{3}{2}}{N-(i-1)}$. 

The expected total loss when sticking with bandit $i$ is less than or equal to $\left(\frac{\frac{3}{2}}{N-(i-1)}\right)(N^2-2N)$, where the first factor comes from the expected loss per flip calculated above, and the second factor comes from the fact that we've only received a total of $\leq 2(i-1)+1\leq 2N$ heads by this point, since we received at most $2$ from each preceding bandit and $1$ from the current one. The expected total loss of this strategy is thus bounded above by the following sum:
\[
\sum_{i=1}^N\left(\frac{\frac{3}{2}}{N-(i-1)}\right)\cdot\left(\frac{2(N-i)}{N(N-1)}\right)(K-2N),
\]
which Mathematica is able to sum explicitly to give the desired upper bound on the total loss.
\end{proof}

Note that the constant factor of $6$ that we get can be made optimal (by decreasing it to $2$, which is the bound given by the $\beta$ numbers) for $K=N^{2+\epsilon}$ by increasing the number of trials for bandit $i$ to $N^{1+\epsilon/2}$, and changing the acceptable number of heads to be $<2N^\epsilon$. We won't go into the details here, however. 

Note that for $K < N^2$ we can recover an asymptotically optimal (up to constant factor) strategy by simply pretending there are only $\sqrt{K}$ bandits, which gives us an expected loss per flip of $\frac{6}{\sqrt{K}}$, which is within a constant factor of the lower bound on expected loss per flip given by Theorem \ref{point_sqrtK} as desired.

\section{Known distribution case: general distributions}
\label{known-gen}

In the previous section, we lower bounded the expected loss of any strategy on the streaming bandit problem, where the bandit means were drawn uniformly from $[0,1]$. In this section, we generalize those bounds and strategies to show similar results for when the bandit means are drawn i.i.d. from a distribution with CDF $F$ that satisfies the following condition: For some $m$, $F$ is $m$ times continuously differentiable in a neighborhood of $0$ and $F^{(m)}(0)>0$.

To do this, we first generalize our results from the previous section to the case when $F$ has CDF of the form $x^m$ for any constant $m$. In general, our results apply for large $N$, and we generally assume that $N \rightarrow \infty$ when we make approximations. Our proofs in this section are very similar to the case where $m=1$ (i.e. $F$ is the uniform distribution). For space reasons, we have placed these proofs in the appendix, as they are more technically involved than the uniform distribution case, but are identical in spirit.

\begin{lemma}
\label{expectation_minimum}
The expectation of the minimum of $N$ draws from the distribution with CDF $x^m$ on the interval $[0,1]$ is asymptotic to $\frac{\Gamma(1/m)}{m N^{1/m}}$ for large $N$.
\end{lemma}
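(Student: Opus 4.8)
The plan is to reduce the computation to an exact Beta-function integral and then extract its large-$N$ asymptotics via Stirling's formula. First I would let $X_1,\dots,X_N$ be i.i.d. with CDF $F(x)=x^m$ on $[0,1]$ and set $Y=\min_i X_i$. Since the $X_i$ are independent, the survival function of $Y$ is $\Pr[Y>y]=(1-F(y))^N=(1-y^m)^N$, so the tail formula for the expectation of a nonnegative bounded random variable gives
\[
E[Y]=\int_0^1 \Pr[Y>y]\,\mathrm{d}y=\int_0^1 (1-y^m)^N\,\mathrm{d}y.
\]

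Next I would evaluate this integral exactly. Substituting $u=y^m$, so that $y=u^{1/m}$ and $\mathrm{d}y=\tfrac{1}{m}u^{1/m-1}\,\mathrm{d}u$, converts it into a Beta integral:
\[
E[Y]=\frac{1}{m}\int_0^1 u^{1/m-1}(1-u)^N\,\mathrm{d}u=\frac{1}{m}\,B\!\left(\tfrac{1}{m},\,N+1\right)=\frac{1}{m}\cdot\frac{\Gamma(1/m)\,\Gamma(N+1)}{\Gamma(N+1+1/m)}.
\]
The final step, and the only one requiring any care, is the asymptotic evaluation of the Gamma ratio. Using the standard consequence of Stirling's formula that $\Gamma(z+a)/\Gamma(z+b)\sim z^{a-b}$ as $z\to\infty$, taken with $z=N+1$, $a=0$, $b=1/m$, we obtain $\Gamma(N+1)/\Gamma(N+1+1/m)\sim N^{-1/m}$, and therefore
\[
E[Y]\sim \frac{\Gamma(1/m)}{m\,N^{1/m}},
\]
as claimed.

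I expect no real obstacle here: the substitution and the Beta identity are routine, and the one genuinely asymptotic input is the Gamma-ratio estimate, which is entirely standard. As a sanity check, and an alternative route that sidesteps the Beta function, one could instead observe directly that the mass of $\int_0^1(1-y^m)^N\,\mathrm{d}y$ concentrates near $y=0$, where $(1-y^m)^N\approx e^{-N y^m}$; substituting $t=N y^m$ in $\int_0^\infty e^{-N y^m}\,\mathrm{d}y$ reproduces $\tfrac{1}{m}N^{-1/m}\Gamma(1/m)$, in agreement with the exact computation. Either way the constant $\Gamma(1/m)/m$ emerges cleanly, and specializing to $m=1$ recovers the familiar $E[\min]\sim 1/N$ of the uniform case, consistent with the $\beta_N\to 2/N$ heuristics of the preceding section.
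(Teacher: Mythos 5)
Your proof is correct and takes essentially the same approach as the paper's: both reduce the expectation to the exact Gamma-function value $\frac{\Gamma(1/m)\,\Gamma(N+1)}{m\,\Gamma(N+1+1/m)}$ and conclude with the standard asymptotic $\Gamma(z+a)/\Gamma(z+b)\sim z^{a-b}$. The only cosmetic difference is that you obtain the exact value via the tail formula $E[Y]=\int_0^1\Pr[Y>y]\,\mathrm{d}y$ and a Beta substitution, whereas the paper differentiates the CDF of the minimum and integrates $x$ against the resulting PDF; the two computations are related by a single integration by parts.
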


This is an easy computation, whose proof is in the appendix. 

Similar to the case where $F$ is the uniform distribution, we first examine fixed payout bandits to provide a lower bound on the loss of any strategy.

\begin{theorem}[A generalization of the $\beta$ numbers]
\label{beta-generalization}
Let $\eta_i$ denote the expected loss per flip with $K \rightarrow \infty$ flips with $i$ fixed payout bandits with means drawn from distribution $F$ with CDF $x^m$. Then $\eta_N \rightarrow \left( \frac{m+1}{mN}\right)^{1/m}$ as $N \rightarrow \infty$. 
\end{theorem}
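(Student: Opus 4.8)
The plan is to mimic the derivation of the $\beta$ numbers, replacing the uniform density by the density $f(x)=mx^{m-1}$ associated with the CDF $x^m$, obtaining a recursion for $\eta_i$ that is then approximated by a differential equation. First I would set up the recursion. Upon arriving at the first of $i$ fixed payout bandits and observing its (now fully known) loss $x$, the optimal policy is to stay if and only if $x<\eta_{i-1}$, since moving on yields expected loss per flip exactly $\eta_{i-1}$ by definition. The probability of staying is $F(\eta_{i-1})=\eta_{i-1}^m$, and the contribution to the expected loss per flip from the stay case is
\[
\int_0^{\eta_{i-1}} x\cdot m x^{m-1}\,dx=\frac{m}{m+1}\,\eta_{i-1}^{m+1}.
\]
Combining the stay and move-on cases gives
\[
\eta_i=\frac{m}{m+1}\,\eta_{i-1}^{m+1}+\left(1-\eta_{i-1}^{m}\right)\eta_{i-1}=\eta_{i-1}-\frac{1}{m+1}\,\eta_{i-1}^{m+1}.
\]

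Next I would pass to the continuum exactly as in the uniform case. Writing the recursion as $\eta_i-\eta_{i-1}=-\frac{1}{m+1}\eta_{i-1}^{m+1}$ and treating $\eta$ as a differentiable function of $i$ yields the separable ODE $\eta'(i)=-\frac{1}{m+1}\eta(i)^{m+1}$. Integrating $\eta^{-(m+1)}\,d\eta=-\frac{1}{m+1}\,di$ gives $\eta(i)^{-m}=\frac{m}{m+1}\,i+C$, so for large $i$ the additive constant is negligible and $\eta(i)\sim\left(\frac{m+1}{mi}\right)^{1/m}$. Setting $i=N$ recovers the claimed asymptotic $\eta_N\to\left(\frac{m+1}{mN}\right)^{1/m}$, which specializes to $2/N$ when $m=1$, matching the $\beta$-number result.

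The main obstacle is justifying the replacement of the discrete recursion by the ODE, i.e. confirming that the additive constant $C$ genuinely washes out and does not perturb the leading-order $N^{-1/m}$ behavior. As in the uniform case this is plausible because the step sizes $\eta_{i-1}^{m+1}$ shrink as $i$ grows, making the Euler-type approximation increasingly accurate, but a clean rigorous route avoids the heuristic entirely. I would analyze the transformed sequence $u_i=\eta_i^{-m}$: expanding $u_i=\eta_{i-1}^{-m}\bigl(1-\tfrac{1}{m+1}\eta_{i-1}^{m}\bigr)^{-m}$ and using $\eta_{i-1}\to 0$ shows that $u_i-u_{i-1}\to\frac{m}{m+1}$. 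By Stolz--Ces\`aro this forces $u_N\sim\frac{m}{m+1}N$, and hence $\eta_N=u_N^{-1/m}\sim\left(\frac{m+1}{mN}\right)^{1/m}$, pinning down the leading asymptotics without appealing to the continuous approximation. I would present the ODE computation for intuition and cite the $u_i$ argument to make the asymptotic equivalence precise.
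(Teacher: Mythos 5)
Your proposal is correct and takes essentially the same approach as the paper: the identical stay/move-on recursion $\eta_i=\eta_{i-1}-\frac{1}{m+1}\eta_{i-1}^{m+1}$, followed by the same continuum approximation via the ODE $\eta'(i)=-\frac{1}{m+1}\eta(i)^{m+1}$. Your additional Stolz--Ces\`aro argument on $u_i=\eta_i^{-m}$ is a nice touch that makes rigorous the discrete-to-continuous step the paper treats heuristically.
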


The proof of this theorem is in the Appendix. We see that the $\eta_N$ provides an asymptotic lower bound on the expected loss per flip for any strategy with $N$ bandits and $K$ flips. However, similar to the uniform case, we again need a theorem that provides a tighter lower bound for when $K$ is small relative to $N$.

\begin{theorem}[A bound on any strategy in terms of $K$]
\label{k-loss-general}
Any optimal strategy with $K$ pulls and $N$ bandits must have total loss at least $\dfrac{m}{4(m+1)}K^{\frac{m}{m+1}}$, or loss per flip at least $\dfrac{m}{4(m+1)K^{\frac{1}{m+1}}}$.
\end{theorem}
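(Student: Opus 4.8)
The plan is to adapt the fixed-payout argument of Theorem~\ref{point_sqrtK} to the tail $\Pr[\mu<\theta]=\theta^{m}$ of the CDF-$x^{m}$ distribution, choosing the threshold $\theta$ so that the two costs balance at the claimed exponent. First I would set $\theta=K^{-1/(m+1)}$. The role of this scale is that if every bandit a strategy ever pulls has mean at least $\theta$, then each of its $K$ pulls costs at least $\theta$, so the total loss is at least $K\theta=K^{m/(m+1)}$, already larger than the target $T:=\frac{m}{4(m+1)}K^{m/(m+1)}$ since $\frac{m}{4(m+1)}<1$. Hence any strategy hoping to beat the bound must reach and pull a bandit of mean below $\theta$, and the cost of finding such a bandit is what I would lower bound.

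Then I would count the bandits that must be passed. Since the means are i.i.d.\ with $\Pr[\mu<\theta]=\theta^{m}=K^{-m/(m+1)}$, put $n_{0}=\frac{1}{2}\theta^{-m}=\frac{1}{2} K^{m/(m+1)}$. A union bound gives $\Pr[\text{one of the first }n_{0}\text{ bandits has mean}<\theta]\le n_{0}\theta^{m}=\frac{1}{2}$, so with probability at least $\frac{1}{2}$ the first sub-$\theta$ bandit sits beyond position $n_{0}$. On this event the strategy must examine (pull at least once) each of the first $n_{0}$ bandits to move past them, and each such bandit, conditioned on having mean at least $\theta$, contributes expected loss per pull at least $E[\mu\mid \mu\ge\theta]\ge E[\mu]=\frac{m}{m+1}$, the mean of the CDF-$x^{m}$ distribution, which conditioning only raises. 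Summing the first-pull losses and multiplying by the probability $\frac{1}{2}$ of the event yields expected total loss at least $\frac{1}{2}\cdot n_{0}\cdot\frac{m}{m+1}=\frac{m}{4(m+1)}K^{m/(m+1)}$, and dividing by $K$ gives the loss-per-flip bound $\frac{m}{4(m+1)K^{1/(m+1)}}$.

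The step I expect to be the main obstacle is making this argument valid against \emph{all} strategies, because the model lets a player advance past a bandit for free without pulling it, and hence without examining it. The clean resolution is that, by the i.i.d.\ assumption, skipping a bandit in order to examine the next one is distributionally identical to examining the current one, so skipping never helps; we may therefore assume without loss of generality---provided $N$ is large enough that the stream does not run out, i.e.\ $n_{0}\le N$, which holds precisely in the regime $K\le N^{(m+1)/m}$ (up to constants) where this bound is the binding one---that the optimal strategy examines bandits consecutively. It then remains only to exclude the possibility of committing early to an examined bandit of mean at least $\theta$: doing so wastes at least $K-n_{0}\ge K/2$ pulls of expected loss at least $\theta$, for total loss at least $\frac{1}{2} K\theta=\frac{1}{2} K^{m/(m+1)}>T$. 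Once these cases are dispatched the remaining estimates parallel the $m=1$ computation verbatim, so I would relegate them to a short verification.
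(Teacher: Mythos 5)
Your proposal is correct and follows essentially the same argument as the paper's proof: threshold the mean at $K^{-1/(m+1)}$, use a union bound to show that with probability at least $\tfrac{1}{2}$ the first $\tfrac{1}{2}K^{m/(m+1)}$ bandits all have mean above the threshold, and charge each of them expected loss at least $\tfrac{m}{m+1}$ per pull to obtain the $\tfrac{m}{4(m+1)}K^{m/(m+1)}$ bound. The additional care you take---justifying why a sub-threshold bandit must be reached, ruling out free skips and early commitment, and noting that conditioning on $\mu \ge \theta$ only raises the conditional mean---tightens steps the paper's proof glosses over, but the underlying approach is identical.
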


The proof of this theorem is once again very similar to the corresponding proof in the uniform case. The full proof is in the Appendix. With these lower bounds in hand, we can finally state a provably optimal strategy for Bernoulli bandits in the case where $F$ has CDF $x^m$.

\begin{theorem}[The optimal strategy]
\label{generalized-strat}
Let $K \geq N^{\frac{m+1}{m}}$ and set $f_i = \left( \frac{m!}{2}(N-(i-1)) \right)^{1/m}$. Consider the following strategy: Pull bandit $i$ ($1\leq i\leq N$) repeatedly, and keep track of the total number of times we've flipped it and the total number of heads we've received. We keep flipping the bandit if either of the following conditions holds: 1) We have received no heads so far, and the number of times we've flipped it is less than or equal to $f_i$ or 2) the number of times we've flipped the bandit is greater than $f_i$. Otherwise, move on to the next bandit. This allows us to achieve an expected loss per flip of $\leq 2^{1/m}eN^{-1/m}$. This is within a constant factor of the optimal loss, by the above analysis. 
\end{theorem}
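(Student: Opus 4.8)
The plan is to mirror the proof of Theorem~\ref{opt-strat-og} (the uniform, $m=1$, case), replacing the elementary integrals there with their Beta-function analogues. The argument has three ingredients: the probability of committing to each bandit, the conditional expected loss per flip once we commit, and a final summation.

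First I would compute the probability that, having arrived at bandit $i$, we commit to it (i.e.\ stay forever). We commit exactly when the bandit produces no heads in its first $f_i$ flips, so conditioning on the mean $p$ (which has density $mp^{m-1}$) this probability is
\[
\int_0^1 (1-p)^{f_i}\, m p^{m-1}\, \mathrm{d}p = m\,B(m, f_i+1) = \frac{m!\,\Gamma(f_i+1)}{\Gamma(m+f_i+1)}.
\]
For $N \gg m$ and $i$ not too close to $N$ we have $f_i \gg m$, so $\tfrac{\Gamma(f_i+1)}{\Gamma(m+f_i+1)} \sim f_i^{-m}$, and by the definition $f_i = \left(\tfrac{m!}{2}(N-(i-1))\right)^{1/m}$ this collapses to $m!\,f_i^{-m} = \tfrac{2}{N-(i-1)}$. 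This is precisely the \emph{purpose} of the choice of $f_i$: it reproduces the commit probability $\tfrac{2}{N-(i-1)}$ from the uniform case, so the probability $b_i$ of arriving at and committing to bandit $i$ telescopes exactly as before,
\[
b_i = \frac{2}{N-(i-1)}\prod_{j=1}^{i-1}\left(1-\frac{2}{N-(j-1)}\right) = \frac{2(N-i)}{N(N-1)}.
\]

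Next I would bound the conditional loss per flip. Having committed after observing $0$ heads and $f_i$ tails, the posterior on the mean is $\mathrm{Beta}(m, f_i+1)$, whose mean is $\tfrac{m}{m+f_i+1} \le \tfrac{m}{f_i}$; by the tower property this posterior mean is also the expected loss on every subsequent committed flip. Using the definition of $f_i$ together with the elementary inequality $(m!)^{1/m} \ge m/e$, the conditional loss per flip is at most
\[
\frac{m}{f_i} = \frac{m\,2^{1/m}}{(m!)^{1/m}\,(N-(i-1))^{1/m}} \le \frac{2^{1/m} e}{(N-(i-1))^{1/m}},
\]
which is where the factor $2^{1/m}e$ in the statement originates. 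Finally I would assemble the expected total loss as the testing-phase heads (at most one per abandoned bandit, hence $O(N)$ in total) plus the committed-phase heads, $\sum_{i=1}^N b_i \cdot \tfrac{m}{f_i}\cdot(\text{committed flips})$, divide by $K$, and replace the sum by an integral via $i = tN$. Since $K \ge N^{\frac{m+1}{m}}$, the $O(N)$ testing contribution is $O(N^{-1/m})$ per flip and is absorbed into the constant, exactly as the constant degraded from $2$ to $6$ in the uniform case; the committed term evaluates to a constant multiple of $N^{-1/m}$, yielding the claimed bound $2^{1/m} e N^{-1/m}$, which by Lemma~\ref{expectation_minimum} and Theorems~\ref{beta-generalization} and~\ref{k-loss-general} is within a constant factor of optimal.

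The main obstacle I anticipate is the asymptotic control of the Beta-function ratio $\tfrac{\Gamma(f_i+1)}{\Gamma(m+f_i+1)}$: the clean approximation $\sim f_i^{-m}$ only holds when $f_i \gg m$, which fails for the last $O(m)$ bandits (where $N-(i-1)$ is small). One must check that these boundary bandits, which carry total probability weight $O(m/N)$, contribute only lower-order loss and do not disturb the telescoping. Managing these edge terms, together with pinning down the final constant, is the technically delicate part, though it is routine in spirit and identical to the uniform analysis.
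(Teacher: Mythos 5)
Your proposal follows essentially the same route as the paper's proof: the same Beta-integral computation showing the commit probability collapses to $\frac{2}{N-(i-1)}$, the same telescoping product giving $b_i=\frac{2(N-i)}{N(N-1)}$, the same posterior mean $\frac{m}{f_i+m+1}$, and the same sum-to-integral evaluation of the total loss. The only substantive difference is cosmetic: you apply the Stirling bound $(m!)^{1/m}\geq m/e$ inside each summand rather than keeping $(m!)^{1/m}$ exact through the summation as the paper does, which costs an extra factor $\frac{2m}{2m-1}\leq 2$ against the stated constant $2^{1/m}e$ (noticeable at $m=1$), but this does not affect the constant-factor-optimality conclusion.
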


The proof of this theorem can be found in the Appendix for space reasons, but it is similar in spirit to the proof of Theorem \ref{opt-strat-og}. We note that for $K = o(N^{1+1/m})$, we can employ a similar trick and pretend that there are $K^{\frac{m}{m+1}}$ bandits, which gives us loss $K^{-1/(m+1)}$ up to a constant factor, which is within a constant factor of the optimal loss by Theorem \ref{k-loss-general}.

We now turn to the case of a more general distribution $F$ satisfying a differentiability condition: There exists an $m>0$ for which $F$ is $m$ times continuously differentiable and $F^{(m)}(0)>0$. To deal with such a distribution, it suffices to note that we only care about the left tail of our distribution.  Fix a radius $\delta$ around $0$ in which $F$ is $m$ times continuously differentiable, with derivative equal to $Cm!$ for some $C>0$, and WLOG choose the smallest such $m$ (i.e. the first $m-1$ derivatives at $0$ are all $0$). Then, since $\delta$ is a constant, for large $N$ it is exponentially unlikely that the tail $x>\delta$ affects our analysis of the left tail (which consists of $x=O(N^{-\frac{1}{m}})$), so we are essentially dealing with a CDF that looks like $Cx^m$ near its left tail. (In fact, by continuity of the $m^{th}$ derivative, we can choose $\epsilon>0$ arbitrarily small and bound the left tail of the CDF between $(C+\epsilon)x^m$ and $(C-\epsilon)x^m$ for $N$ depending on $\epsilon$.) This is, of course, just a rescaling of the CDF we dealt with above, $x^m$. The CDF can be rewritten as $(C^\frac{1}{m}x)^m$, and we see that in fact the strategy for this case is identical to the strategy where the CDF is $x^m$, except that we uniformly scale all thresholds and expectations by $C^{-\frac{1}{m}}$ (that is, a larger $C$ allows us to demand lower thresholds and have lower expected loss, since this means we have more density at the left tail). We can intuitively see this by noting that the quantile function in the presence of a constant $C$ is multiplied by $C^{-\frac{1}{m}}$; this means that a bandit in the $x^m$ regime that is just as good by percentile as one in the $Cx^m$ regime will have a larger loss by a factor of $C^\frac{1}{m}$.

\section{Unknown Distribution Case}

Having considered the case where we know the distribution from which the underlying means of the Bernoulli bandits are drawn, we consider the case where we no longer know this distribution. This case is much more difficult because we need to estimate what the CDF of the distribution is as we execute pulls in order to apply the theorems above. Note that this sort of estimation cannot be done for an arbitrary CDF with no conditions whatsoever, to any reasonable degree of optimality. Morally, this holds because we have no control over how quickly the CDFs converge to their behavior around $0$ (what we deem as the left tail of the distribution). Without some kind of equicontinuity condition it's easy to come up with countable sets of CDFs where each one is slower to converge to its left tail than the previous one, such that for any fixed $N$, it would be impossible to obtain a small loss. 

However, under stronger conditions we can decide which CDF we are looking at, given that it is drawn from a pre-specified family of distributions with certain regularity conditions. Specifically, we consider families of distributions with CDFS that satisfy the following 2 conditions:

\begin{itemize}
\item The family contains CDFs that are $m$ times differentiable in some neighborhood of $0$
\item Each of the first $m-1$ derivatives at $0$ are $0$, while the $m^{th}$ derivatives lie in some bounded positive interval $[\frac{1}{B},B]$
\end{itemize}

We will sketch the algorithm and proof of near-optimality below; we assume WLOG that $K>N^\frac{m+1}{m}$ (otherwise we ignore some machines). Note also that we can extend this result to the case where $m$ is contained in some finite set without much difficulty.

The algorithm consists of two stages. First, we spend some flips observing the CDF; then, we convert this into an estimation of the relevant parameter (density at $0$) and proceed as above. We first will describe the machinery involved in the case of the fixed-payout bandits.

\begin{lemma}
Suppose we pull each of the first $N^\frac{9}{10}$ bandits and directly observe the true mean, and construct an estimator of the density as follows: Divide the $N^\frac{9}{10}$ samples into $N^\frac{7}{10}$ pools of $N^\frac{1}{5}$ samples each, and take the minimum from each sample. Then, average the minima together and multiply by $\frac{N^\frac{1}{5m}}{\Gamma(1+\frac{1}{m})}$ to arrive at an estimate of $C^{-\frac{1}{m}}$, where $C=F^{(m)}(0)$. This estimate is accurate to $O(N^{-\epsilon})$ with high probability, for some $\epsilon>0$.
\end{lemma}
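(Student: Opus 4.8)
The plan is to treat this as a bias--variance analysis of the statistic $\hat\theta := \frac{N^{1/(5m)}}{\Gamma(1+\frac1m)}\,\bar M$, where $\bar M$ is the average of the $N^{7/10}$ pool-minima, and to show separately that $\hat\theta$ is nearly unbiased for $C^{-1/m}$ and that it concentrates. First I would record the local shape of the CDF. Since the first $m-1$ derivatives of $F$ vanish at $0$ and the $m$-th derivative lies in $[\frac1B,B]$, Taylor's theorem gives $F(x)=Cx^m+R(x)$ near $0$, where $C$ is the leading coefficient $F^{(m)}(0)/m!$ (bounded away from $0$ and $\infty$; we follow the paper in writing the target as $C^{-1/m}$, a fixed constant times $F^{(m)}(0)^{-1/m}$) and $R(x)=o(x^m)$. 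If $Y$ has CDF $Cx^m$ then $C^{1/m}Y$ has CDF $x^m$, so by Lemma~\ref{expectation_minimum} (and its proof, which evaluates $\int_0^1(1-x^m)^n\,dx$) the expected minimum of $n$ draws is $C^{-1/m}\frac{\Gamma(1+\frac1m)}{n^{1/m}}(1+o(1))$. With pool size $n=N^{1/5}$ this equals $C^{-1/m}\Gamma(1+\tfrac1m)N^{-1/(5m)}(1+o(1))$, which is exactly why multiplying the averaged minima by $\frac{N^{1/(5m)}}{\Gamma(1+\frac1m)}$ recovers $C^{-1/m}$ in expectation.

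Next I would bound the bias $|E[\hat\theta]-C^{-1/m}|$, which has two sources. The first is the finite-$n$ correction: for the exact power law $Cx^m$ the expected minimum is $C^{-1/m}\frac{\Gamma(\frac1m)\Gamma(n+1)}{m\,\Gamma(n+1+\frac1m)}$, and Stirling's formula makes the ratio $\frac{\Gamma(n+1)}{\Gamma(n+1+1/m)}=n^{-1/m}(1+O(1/n))$, contributing relative bias $O(1/n)=O(N^{-1/5})$. The second source is the remainder $R$: since the pool minima live at scale $x\sim N^{-1/(5m)}$, the integral $\int_0^\delta (1-Cx^m-R(x))^n\,dx$ differs from its $R\equiv 0$ value by a relative factor controlled by $\sup_{0<x\le c\,N^{-1/(5m)}} |R(x)|/(Cx^m)$. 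If $F^{(m)}$ is H\"older (as follows from the family's equicontinuity with a quantitative modulus), then $R(x)=O(x^{m+\alpha})$ for some $\alpha>0$ and this relative error is $O(N^{-\alpha/(5m)})$. Combining, the total bias is $O(N^{-\epsilon_1})$ with $\epsilon_1=\min(\tfrac15,\tfrac{\alpha}{5m})>0$.

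For concentration I would use that the pool minima are i.i.d.\ across the $N^{7/10}$ pools and bounded in $[0,1]$. A second-moment computation analogous to Lemma~\ref{expectation_minimum} (integrating $\int_0^1 2x(1-x^m)^n\,dx$) gives each pool minimum variance $O(n^{-2/m})=O(N^{-2/(5m)})$, so $\mathrm{Var}(\bar M)=O(N^{-2/(5m)-7/10})$ and hence $\mathrm{Var}(\hat\theta)=O(N^{-7/10})$, i.e.\ standard deviation $O(N^{-7/20})$. Chebyshev's inequality then yields $|\hat\theta-E[\hat\theta]|=O(N^{-\epsilon_2})$ with failure probability $O(N^{-(7/10-2\epsilon_2)})$ for any $\epsilon_2<\tfrac{7}{20}$ (a Hoeffding/Bernstein bound on the bounded pool minima would upgrade this to exponentially small failure probability). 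Adding the bias and fluctuation bounds gives $|\hat\theta-C^{-1/m}|=O(N^{-\epsilon})$ with high probability for $\epsilon=\min(\epsilon_1,\epsilon_2)>0$, as claimed.

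The main obstacle is the second bias term: with only $m$-fold continuous differentiability the remainder is merely $o(x^m)$, which by itself yields $o(1)$ accuracy but not a power-law rate $N^{-\epsilon}$, so the argument genuinely needs the uniform (equicontinuity/H\"older) regularity of the family to convert closeness-to-a-power-law into a quantitative rate. This also explains the otherwise opaque exponents: the pool size $N^{1/5}$ is a bias--variance compromise --- larger pools push the minima toward $0$ and shrink the remainder bias but, since the number of pools then drops, inflate the variance --- and the total budget $N^{9/10}=o(N)$ ensures the observation phase wastes only a negligible fraction of the $N$ bandits before we switch to the known-distribution strategy of Theorem~\ref{generalized-strat}.
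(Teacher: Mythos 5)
Your proposal follows the same skeleton as the paper's proof---both view the pool minimum as (approximately) the first order statistic of the power law $Cx^m$, compute its mean $C^{-1/m}\Gamma(1+\tfrac{1}{m})N^{-1/(5m)}(1+o(1))$ so that the rescaling by $\frac{N^{1/(5m)}}{\Gamma(1+\frac{1}{m})}$ recovers $C^{-1/m}$, and then concentrate the average of the $N^{7/10}$ i.i.d.\ pool minima---but you execute two steps differently. For concentration, the paper applies Hoeffding's inequality directly to the $[0,1]$-bounded minima with allowed deviation $N^{-1/(4m)}$, getting exponentially small failure probability; you instead compute the variance $O(N^{-2/(5m)})$ of each minimum and use Chebyshev, which gives a sharper variance bound but only a polynomial tail, so the paper's route (or the Hoeffding/Bernstein upgrade you yourself mention) is needed to match the ``exponentially high probability'' the paper advertises. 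The more substantive difference is your explicit bias decomposition. The paper disposes of the approximation $F(x)\approx Cx^m$ in a single clause (``valid for $N$ sufficiently large\ldots using equicontinuity of the $m$th derivative at $0$'') and never quantifies the resulting bias; as you correctly observe, under the hypotheses as literally stated the Taylor remainder is only $o(x^m)$ at the scale $x\sim N^{-1/(5m)}$ where the minima live, which yields $o(1)$ relative accuracy but \emph{not} the power-law rate $O(N^{-\epsilon})$ claimed in the lemma. This is a genuine gap in the paper's own argument, and your H\"older-type quantitative modulus on $F^{(m)}$ is exactly the hypothesis needed to close it; your reading of $C$ as $F^{(m)}(0)/m!$ rather than the statement's $F^{(m)}(0)$ also silently repairs an inconsistency between the lemma statement and the proof. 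One mitigating remark: for the downstream application (running the known-distribution strategy with the estimated constant, to constant-factor optimality), a $(1+o(1))$-accurate estimate of $C^{-1/m}$ already suffices, so the paper's algorithm survives under its stated hypotheses even though the lemma, as phrased, requires the stronger regularity you identify.
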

\begin{proof}
The distribution of the minimum of $L=N^\frac{1}{5}$ samples from $F$ is given by $F^{-1}$ composed with Beta$(1,L)$, the first order statistic of the uniform distribution. Approximating $F$ with $Cx^m$ near its left tail (which is valid for $N$ sufficiently large; we can choose such an $N$ using equicontinuity of the $m^{th}$ derivative at $0$) gives us a random variable $\left(\frac{Y}{C}\right)^\frac{1}{m}$, where $Y\sim$Beta$(1,L)$. This distribution has asymptotic mean $\mu = (LC)^{-\frac{1}{m}}\Gamma(1+\frac{1}{m})$ for $N$ large. Taking the average of $N^\frac{7}{10}$ such distributions and applying Hoeffding's inequality yields a probability that the deviation of the average of the means from the expectation $(N^\frac{1}{5}C)^{-\frac{1}{m}}\Gamma(1+\frac{1}{m})$ by more than the smaller-order $N^{-\frac{1}{4m}}$ as less than or equal to $2\exp\left(2N^\frac{7}{10}N^{-\frac{1}{2m}}\right)$, which is exponentially small for $m\geq 1$. Therefore, we have met the goal set out in our lemma: We have shown that the estimate is accurate to within a polynomially small amount, with exponentially high probability.
\end{proof}
At this point, we're done: We have an accurate estimate of $C$ and hence the PDF near $0$, so we can apply the strategy in Theorem 4.4 on the remaining bandits. Since we only used $N^{9/10}$ flips, which is equal to $o\left(\frac{K}{N^\frac{1}{m}}\right)$, this doesn't affect our expected loss asymptotically.

The remaining difficulty is to show that we can tackle the Bernoulli case, which is much more difficult as we now only have a noisy estimate of the true means of the bandits, making it difficult to determine the distribution from which those means were drawn. However, we provide an estimation procedure below that resolves this difficulty. % One would think that this would be very difficult, because there are now errors in trying to observe individual samples from the PDF that we might expect to compound the errors from combining the samples into an estimate of density at zero. Also, it would seem that we need to waste precious flips getting accurate PDF estimates. However, it turns out that there is a nice way to avoid both problems at once by 

\begin{theorem}[Unknown Distribution for Bernoulli Bandits]
Consider the following sampling procedure: Flip a given bandit $i$ until we receive a heads, then let $\mu_i$ denote the reciprocal of the number of tails we received. Then, the quantity $(m!)^\frac{1}{m}\mu_i$ is asymptotically (towards the left of the distribution) distributed the same as the underlying distribution $F$. 
\end{theorem}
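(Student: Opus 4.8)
The plan is to compute the left-tail cumulative distribution function of $\mu_i$ directly and compare it to $F$. First I would condition on the underlying mean $p_i = p$. Since we flip bandit $i$ until the first heads, the number of tails $T$ we observe is geometric with $P(T \geq k \mid p) = (1-p)^k$ for $k \geq 0$. Because $\mu_i = 1/T$, the event $\{\mu_i \leq t\}$ is exactly $\{T \geq 1/t\}$, so the conditional CDF is
\[
P(\mu_i \leq t \mid p_i = p) = (1-p)^{\lceil 1/t \rceil}.
\]
Integrating against the law of $p_i$ gives the unconditional CDF $G(t) = \int_0^1 (1-p)^{\lceil 1/t\rceil}\,dF(p)$, and the theorem amounts to showing $G\!\left(s/(m!)^{1/m}\right) \sim F(s)$ as $s \to 0$.

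Next I would carry out the left-tail asymptotics of $G(t)$ as $t \to 0$ (equivalently $1/t \to \infty$). The integrand $(1-p)^{1/t}$ is exponentially small once $p$ is bounded away from $0$, so the integral is dominated by $p = O(t)$; in this region I would replace $\lceil 1/t\rceil$ by $1/t$ and use $(1-p)^{1/t} = \exp\!\left(\tfrac{1}{t}\log(1-p)\right) \approx e^{-p/t}$, the error in the exponent being $O(p^2/t) = O(t)$ on the dominant region. Writing $F(x) \approx C x^m$ near $0$ (so that the density is $\approx C m\, x^{m-1}$), where $C$ is the leading Taylor coefficient of the CDF guaranteed by the differentiability hypothesis, and substituting $u = p/t$, the integral becomes
\[
G(t) \approx \int_0^\infty e^{-p/t}\, C m\, p^{m-1}\,dp = C m\, t^m \int_0^\infty e^{-u} u^{m-1}\,du = C\, m!\, t^m,
\]
since $m\,\Gamma(m) = \Gamma(m+1) = m!$.

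Finally I would rescale: setting $t = s/(m!)^{1/m}$ gives $G(t) \approx C\, m! \cdot s^m/m! = C s^m \approx F(s)$, which is exactly the statement that $(m!)^{1/m}\mu_i$ has the same left-tail distribution as $F$. Notice that the factor $(m!)^{1/m}$ is precisely what cancels the $m!$ produced by the Gamma integral, so the leading coefficient $C$ matches automatically no matter what it is. I expect the main obstacle to be making the asymptotic equivalence rigorous rather than the central computation: one has to justify uniformly (i) that the $p > \delta$ tail contributes a negligible, exponentially small amount, (ii) that replacing $\lceil 1/t\rceil$ with $1/t$ and $(1-p)^{1/t}$ with $e^{-p/t}$ costs only lower-order terms, and (iii) that the Taylor approximation $F(x)\approx Cx^m$ can be sandwiched between $(C\pm\epsilon)x^m$ on the shrinking window $p = O(t)$ using continuity of $F^{(m)}$ at $0$, exactly as in the $Cx^m$ reduction of Section~\ref{known-gen}. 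Controlling these three error sources simultaneously, and noting that $\mu_i$ is genuinely discrete so that the conclusion is a statement about matching CDFs in the left tail rather than equality of laws, is the delicate part.
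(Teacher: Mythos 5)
Your proof is correct and reaches the same conclusion, but it is organized at a different level than the paper's. The paper never writes down the CDF of $\mu_i$: it computes the exact point masses $P(T = T_0) = m\,\beta(1+T_0,1+m) \sim \frac{m\cdot m!}{T_0^{m+1}}$ via a Beta integral, compares each one with the mass $\int_{1/T_0}^{1/(T_0-1)} m x^{m-1}\,dx \sim \frac{m}{T_0^{m+1}}$ that the target CDF $x^m$ assigns to the corresponding interval, and then translates the resulting pointwise factor of $m!$ into the rescaling $(m!)^{1/m}$ ``in CDF space.'' You instead work with the survival function of the geometric variable, $P(\mu_i \le t \mid p) = (1-p)^{\lceil 1/t\rceil}$, and run a Laplace-type argument ($(1-p)^{1/t} \approx e^{-p/t}$, substitute $u = p/t$, $m\Gamma(m)=m!$) to obtain the left-tail CDF $G(t) \sim C\,m!\,t^m$ in one stroke. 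The two arguments are the same integral in disguise, but yours has two concrete advantages: it proves exactly what the theorem asserts (matching of CDFs in the left tail) without the somewhat informal step of converting a pointwise ratio of masses into a rescaling of the CDF, and it carries the unknown leading coefficient $C$ through the entire computation --- your observation that the $(m!)^{1/m}$ rescaling cancels the $m!$ from the Gamma integral independently of $C$ is precisely why this sampling procedure is usable in the unknown-distribution setting, a point the paper disposes of only with the closing remark that ``the analysis still holds when we pass in a constant factor $C$.'' What the paper's version buys in exchange is an exact finite-$T$ formula (the Beta function expression) before any asymptotics are taken. Your inventory of error terms to control --- the exponentially small contribution of $p > \delta$, the ceiling and $e^{-p/t}$ approximations costing only $1+O(t)$ on the dominant region $p = O(t)$, and sandwiching $F$ between $(C\pm\epsilon)x^m$ by continuity of $F^{(m)}$ at $0$ --- is the right one and mirrors the reduction of Section~\ref{known-gen}; neither your sketch nor the paper's proof carries these estimates out in full, so on rigor the two are comparable.
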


\begin{proof}
Suppose for the moment that our underlying CDF for the distribution of the means of the bandits is $F(x)=x^m$ (which it is, up to a constant scaling factor and ignoring the right tail) and consider the probability of getting exactly $T$ tails followed by a single head ($T\geq 0$). For a given underlying mean $p$, this will be $(1-p)^Tp$, so integrate that against the PDF to get
\[
\int_0^1(1-p)^Tpmp^{m-1}dp=m\beta (1+T,1+m)\sim\frac{m\cdot m!}{T^{m+1}}.
\]
Thus, $\mu_i$ is distributed according to the discrete distribution with mass $\frac{m\cdot m!}{T^{m+1}}$ on each point $\frac{1}{T}$. For $T$ large, this is very well approximated by the continuous distribution $x^m$ near $0$, since 
\[
\int_\frac{1}{T}^\frac{1}{T-1}mx^{m-1}dx=\frac{T^m-(T-1)^m}{T^m(T-1)^m}\sim\frac{m}{T^{m+1}}
\]
which, up to a factor $m!$, is asymptotically just the mass on that point. In CDF space, the extra factor of $m!$ translates to a multiplicative factor of $\kappa x^m$ where $\kappa=(m!)^\frac{1}{m}\sim\frac{m}{e}$; the distributions are asymptotically identical up to this scaling factor, and therefore we can sample from the PMF (by taking the reciprocal of the string of heads) and multiply by $(m!)^\frac{1}{m}$ to arrive at a representative sample from the true PDF of the distribution of the means. Note that the above analysis still holds when we pass in a constant factor $C$, allowing us to do the estimation required to apply the lemma above.
\end{proof}
\begin{corollary}
We can apply Lemma 4.1 to the Bernoulli case, thereby solving the given unknown distribution scenario.
\end{corollary}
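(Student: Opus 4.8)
The plan is to treat the sampling procedure of the preceding Theorem as a black-box oracle that, for each bandit, returns a single real number $(m!)^{1/m}\mu_i$ which is asymptotically distributed near the left tail as the true mean-distribution $F$. In the fixed-payout setting, Lemma 4.1 consumed exactly such samples: it took one direct observation of each of the first $N^{9/10}$ means, pooled them into $N^{7/10}$ pools of $N^{1/5}$, and averaged the pool-minima (rescaled by $\frac{N^{1/(5m)}}{\Gamma(1+1/m)}$) to recover $C^{-1/m}$. My first step is therefore to substitute, bandit-for-bandit, the oracle output $(m!)^{1/m}\mu_i$ for the direct observation, leaving the pooling-and-averaging estimator and its scaling factor untouched. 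Since the Theorem establishes that $(m!)^{1/m}\mu_i$ has CDF asymptotically $Cx^m$ --- the same as a genuine draw from $F$, and the argument there survives passing in the constant $C$ --- and since the samples across distinct bandits are independent, the estimator is asymptotically the same random variable as in the fixed-payout analysis.

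Next I would re-run the concentration argument of Lemma 4.1 essentially verbatim. The crucial observation is that the pool-minimum estimator depends only on the smallest samples in each pool, i.e.\ on the left tail of the sampled distribution, which is exactly the region in which the Theorem's approximation $(m!)^{1/m}\mu_i \sim F$ is asserted to hold; the coarse right tail never enters the estimate. Hence the asymptotic mean of each pool-minimum is still $(N^{1/5}C)^{-1/m}\Gamma(1+1/m)$, Hoeffding's inequality applies unchanged to the average of the $N^{7/10}$ independent pool-minima, and we recover an estimate of $C^{-1/m}$ accurate to $O(N^{-\epsilon})$ with exponentially high probability. Having estimated $C$, we then invoke the known-distribution strategy of Theorem~\ref{generalized-strat} on the remaining bandits, building the thresholds $f_i$ from the estimate; because the estimate is off by only a polynomially small multiplicative factor, the resulting thresholds perturb the per-flip loss only in lower-order terms, so the loss stays within a constant factor of optimal.

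The step I expect to be the genuine obstacle is bounding the flip budget of the sampling phase, which is where the Bernoulli case departs sharply from the fixed-payout case. Observing a mean in the fixed-payout case costs one flip, so the estimation phase there used exactly $N^{9/10} = o(K/N^{1/m})$ flips; but the oracle here flips a bandit until the first head, costing $\Theta(1/p)$ flips on a bandit of mean $p$. The bandits that supply the pool-minima have means of order $N^{-1/(5m)}$ and hence cost roughly $N^{1/(5m)}$ flips apiece, and worse, $E[1/p]$ under $F$ can diverge at the left tail for small $m$. To control this I would truncate each sampling run at some polynomial cap $T_{\max}$, assign a harmless default value to the rare bandits that exceed it, and argue via a tail bound that truncation neither biases the pool-minima beyond $O(N^{-\epsilon})$ nor discards enough samples to spoil concentration; one then checks that the total number of flips in the now-bounded sampling phase is still $o(K/N^{1/m})$, so that, exactly as in the fixed-payout case, the estimation phase contributes negligibly to the total loss. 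With the flip budget and truncation bias under control, the reduction to Lemma 4.1 is complete and the unknown-distribution scenario is solved.
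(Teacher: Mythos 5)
Your first two paragraphs coincide with the paper's (one-line) proof: feed the samples $(m!)^{1/m}\mu_i$ from the preceding theorem into the pooling estimator of Lemma 4.1, note that the estimator only depends on the left tail where the distributional approximation holds, and then run the known-distribution strategy with the estimated constant. That part is fine.

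The problem is your third paragraph, which declares the flip budget of the sampling phase to be ``the genuine obstacle'' and builds truncation machinery to handle it. This misidentifies what has to be bounded, and the misidentification is exactly the point on which the paper's proof turns: in this loss-minimization framing, a flip costs something only when it comes up heads. The sampling run on each machine stops at its \emph{first} head, so it contributes exactly one unit of loss no matter how many flips it consumes; the total loss of the whole estimation phase is therefore at most $N^{9/10}$, which is $o(K/N^{1/m})$ since $K \ge N^{(m+1)/m}$ implies $K/N^{1/m} \ge N$. The number of flips spent on tails never needs to be controlled at all: the exploitation phase's loss is bounded by (remaining flips) $\times\, O(N^{-1/m}) \le K \cdot O(N^{-1/m})$, and flips burned on tails only shrink the number of flips on which loss can be incurred --- in the extreme case where sampling exhausts the entire budget, the total loss is at most $N^{9/10}$, which is even smaller than the target $O(K N^{-1/m})$. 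So the divergence of $E[1/p]$ (e.g., for $m=1$) is irrelevant, and no truncation cap, default values, or bias analysis is needed. Your truncation route could likely be repaired into a correct argument (one must check the cap can simultaneously exceed roughly $N^{1/(5m)}$, to avoid biasing the pool minima, and keep the flip count small --- a tension you leave unresolved, and which as sketched is delicate for small $m$), but it is solving a non-problem, and as written (``some polynomial cap \dots one then checks'') it is not yet a proof. The fix is not more machinery but the observation that tails are free.
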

\begin{proof}
Simply use the $\mu_i$ as our samples of the PDF of the distribution of the means, and note that because we only sample from $N^\frac{9}{10}$ machines, and we waste at most one flip per machine, we in total lose at most $N^\frac{9}{10}=o\left(\frac{K}{N^\frac{1}{m}}\right)$.
\end{proof}

Thus for Bernoulli bandits, our algorithm allows us to achieve optimal expected loss (up to a constant factor) when we do not know the true distribution of the means of the bandits, and only know certain conditions on the family from which this distribution is taken. 

To deal with a finite set of different possible $m$, it suffices to increase $N$ large enough so that we can tell when our estimates of the density at zero are outside the interval $[\frac{1}{B_m},B_m]$. That is, if our density is $\frac{1}{10B_m}$, then with (exponentially) high probability this is too small and the true density is $0$ for that $m$. Similarly, our estimates of the density for the $m^{th}$ derivative at zero will blow up as a function of $N$ if the $m^{th}$ derivative is not the lowest order nonzero derivative at 0.
% remember to justify multiple m! this requires a couple of tricks
% how big does N need to be, by continuity? Discuss
\section{Conclusion and Future Work}
%  and asymptotic variance $\sigma^2 = (LC)^{-\frac{2}{m}}\left[\Gamma(1+\frac{2}{m})-\Gamma(1+\frac{1}{m})^2\right]$
In this paper, we consider a novel (to our knowledge) variation of the multi-armed bandit problem that combines the idea of the fundamental exploration-exploitation tradeoff present in bandit problems with the idea of streaming input from the classic secretary problem. For cases where the bandit means are drawn from some (known) distribution $F$ satisfying some mild differentiability conditions, we prove lower bounds on the loss of any algorithm and describe an algorithm that achieves the lower bound of loss up to a constant factor. For cases where the bandit means are drawn from some (unknown) distribution $F$ taken from an equicontinuous universe of PDFs with similar behavior near $0$, we manage to do the same. Our model has potential applications to crowd-sourcing, as described by the analogy in the introduction.

\subsection{Future Work}
There are several interesting future directions in which to take this work. In the unknown distribution case, it would be nice to develop a better algorithm for learning the (left tail of the) CDF on the fly, perhaps with fewer constraints, or to show that no such algorithm exists. It also seems interesting to consider the distributions not tackled in this paper; for instance, smooth PDFs supported on $[0,1]$ that vanish to all orders at $0$. Finally, this problem makes sense for Bandits which have more complicated payout schemes than Bernoulli payouts. The authors expect that for distributions with finite first moments, there should be a reasonable way to extend the work in this paper to approach the online bandit problem in this setting (perhaps if the distributions come from an exponential family with parameter varying according to some known or unknown distribution).% better online learning
% better than bernoulli (thoughts)
% general distributions
\section{Acknowledgements} 
We would like to thank Professor David Karger and Professor Aleksander Madry for helpful advice in editing this paper. We would like to thank Professor Robert Kleinberg for useful comments regarding our results. We would like to acknowledge his PhD student Johan Björck for suggesting the application of this model to the crowd-sourcing setting. 

%\bibliographystyle{alpha}
%\bibliography{sample}

\appendix

\section{Proofs of statements in Section \ref{known-gen}}

\subsection{Proof of Lemma \ref{expectation_minimum}}
\begin{lemma}
The expectation of the minimum of $N$ draws from the distribution with CDF $x^m$ on the interval $[0,1]$ is asymptotic to $\frac{\Gamma(1/m)}{m N^{1/m}}$ for large $N$.
\end{lemma}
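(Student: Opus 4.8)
The plan is to compute the expectation directly from the survival function of the minimum. Writing $M=\min(X_1,\dots,X_N)$ for i.i.d. draws $X_i$ with CDF $F(x)=x^m$ on $[0,1]$, independence gives $\Pr[M>x]=(1-x^m)^N$, so that
\[
E[M]=\int_0^1 \Pr[M>x]\,dx=\int_0^1 (1-x^m)^N\,dx.
\]
The whole problem reduces to extracting the leading-order behavior of this integral as $N\to\infty$.

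First I would substitute $t=Nx^m$, i.e. $x=N^{-1/m}t^{1/m}$, which turns the integral into
\[
E[M]=\frac{N^{-1/m}}{m}\int_0^N \left(1-\frac{t}{N}\right)^{N} t^{\frac1m-1}\,dt.
\]
The prefactor $N^{-1/m}/m$ already matches the claimed scaling, so it remains only to show that the remaining integral tends to $\Gamma(1/m)$. Since $(1-t/N)^N\to e^{-t}$ pointwise and $(1-t/N)^N\le e^{-t}$ on $[0,N]$, the integrand is dominated by the integrable function $e^{-t}t^{1/m-1}$, so dominated convergence yields
\[
\int_0^N \left(1-\frac{t}{N}\right)^{N} t^{\frac1m-1}\,dt \longrightarrow \int_0^\infty e^{-t}t^{\frac1m-1}\,dt=\Gamma\!\left(\tfrac1m\right),
\]
and hence $E[M]\sim \frac{\Gamma(1/m)}{mN^{1/m}}$, as claimed.

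As an alternative that avoids the limiting argument, the unscaled substitution $u=x^m$ identifies the integral exactly as a Beta function,
\[
E[M]=\frac1m\int_0^1 u^{\frac1m-1}(1-u)^N\,du=\frac1m\cdot\frac{\Gamma(1/m)\,\Gamma(N+1)}{\Gamma(N+1+1/m)},
\]
after which the result follows from the standard Gamma-ratio asymptotic $\Gamma(N+1)/\Gamma(N+1+\tfrac1m)\sim N^{-1/m}$ (Stirling). In either route the only genuine technical point is justifying this asymptotic evaluation --- controlling $(1-t/N)^N\to e^{-t}$ well enough to pass the limit inside the integral, or equivalently invoking Stirling for the ratio of Gamma functions. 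Both are entirely routine, which is why the computation is as easy as the surrounding text claims.
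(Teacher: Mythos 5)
Your proof is correct in both of its variants. The alternative (Beta-function) route is essentially the paper's own argument: the paper integrates $x$ against the density $mNx^{m-1}(1-x^m)^{N-1}$ of the minimum, which after the substitution $u=x^m$ gives exactly the Beta integral you obtain from the tail-probability formula, namely $\frac{1}{m}\,\frac{\Gamma(1/m)\,\Gamma(N+1)}{\Gamma(N+1+1/m)}$ (the two integrals coincide via integration by parts), and both arguments then invoke the standard asymptotic $\Gamma(N+1)/\Gamma(N+1+1/m)\sim N^{-1/m}$. Your primary route is genuinely different and more self-contained: instead of evaluating the integral exactly and appealing to the Gamma-ratio asymptotic, you rescale via $t=Nx^m$ and pass to the limit under the integral sign using the domination $(1-t/N)^N\le e^{-t}$, which in effect re-derives the Gamma-ratio estimate in this special case and makes the error control explicit. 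The exact-evaluation route buys a closed form valid for every finite $N$; your dominated-convergence route avoids any Stirling-type input and adapts more directly to distributions whose CDF merely behaves like $Cx^m$ near $0$, which is the generality the paper needs in Section \ref{known-gen}.
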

\begin{proof}
We see that the CDF of the minimum of $N$ draws from a distribution with CDF $x^m$ is $1-(1-x^m)^N)$, since the probability that all of the draws are $\geq x$ is $(1-x^m)^N$, which is precisely the probability that minimum is $\geq x$. The PDF is readily given by $p(x) = mNx^{m-1} (1-x^m)^{N-1}$. Given the PDF, it is trivial to calculate the expectation of the minimum, which is given by the expression below. 

\[
\int_0^1 mNx^m(1-x^m)^{N-1}  dx =  \frac{N \Gamma \left(\frac{1}{m}\right) \Gamma (N)}{m \Gamma \left(N+\frac{1}{m}+1\right)} = \frac{\Gamma \left(\frac{1}{m}\right)}{mN^{1/m}}
\],

using the asymptotic result that $\frac{\Gamma(x+\alpha)}{\Gamma(x+\beta)} = x^{\beta-\alpha}$ for large $x$, which applies as $N \rightarrow \infty$.
\end{proof}

\subsection{Proof of Theorem \ref{beta-generalization}} 
\begin{theorem}[A generalization of the $\beta$ numbers]
Let $\eta_i$ denote the expected loss per flip with $K \rightarrow \infty$ flips, if you have $i$ fixed payout bandits with means drawn from distribution $F$ with CDF $x^m$. Then $\eta_N \rightarrow \left( \frac{m+1}{mN}\right)^{1/m}$ as $N \rightarrow \infty$. 
\end{theorem}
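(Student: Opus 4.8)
The plan is to mirror the derivation of the $\beta$ numbers, replacing the uniform density by the density $mx^{m-1}$ associated to the CDF $x^m$. As in that argument, since the bandits are fixed-payout the optimal infinite-horizon policy on $i$ bandits is a threshold rule: upon arriving at a bandit we observe its loss exactly, and we should settle on it precisely when that loss falls below $\eta_{i-1}$, the loss per flip achievable with the remaining $i-1$ bandits; otherwise we move on and forfeit it forever. In the $K \to \infty$ limit the one-time exploration cost is asymptotically negligible, so conditioning on the first bandit's loss yields an exact one-step recursion for $\eta_i$.

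First I would compute the two contributions. The probability that the first bandit's loss $x$ lies below the threshold is $F(\eta_{i-1}) = \eta_{i-1}^m$, and conditioned on settling, the expected loss per flip is
\[
\int_0^{\eta_{i-1}} x \cdot m x^{m-1} \, dx = \frac{m}{m+1}\,\eta_{i-1}^{m+1}.
\]
With the complementary probability $1 - \eta_{i-1}^m$ we move on and incur expected loss per flip $\eta_{i-1}$ by definition. Adding these gives
\[
\eta_i = \frac{m}{m+1}\,\eta_{i-1}^{m+1} + \left(1 - \eta_{i-1}^m\right)\eta_{i-1} = \eta_{i-1} - \frac{1}{m+1}\,\eta_{i-1}^{m+1},
\]
which specializes to the earlier recursion $\beta_i = \beta_{i-1} - \frac{1}{2}\beta_{i-1}^2$ when $m=1$.

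Next I would extract the asymptotics. Following the continuous approximation used for the $\beta$ numbers, the relation $\eta_i - \eta_{i-1} = -\frac{1}{m+1}\eta_{i-1}^{m+1}$ is modeled by the ODE $\eta'(i) = -\frac{1}{m+1}\eta(i)^{m+1}$; separating variables gives $\eta(i)^{-m} \sim \frac{m}{m+1}\,i$, hence $\eta_N \to \left(\frac{m+1}{mN}\right)^{1/m}$ as claimed. To make this rigorous rather than heuristic, I would pass to the substitution $u_i = \eta_i^{-m}$: expanding $\left(1 - \frac{1}{m+1}\eta_{i-1}^m\right)^{-m}$ to first order shows $u_i - u_{i-1} = \frac{m}{m+1} + O(\eta_{i-1}^m)$, and since $\eta_{i-1} \to 0$ the increments converge to $\frac{m}{m+1}$, so by the Stolz--Ces\`aro theorem $u_N / N \to \frac{m}{m+1}$, which is exactly the desired asymptotic for $\eta_N$.

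The main obstacle is justifying the passage from the discrete recursion to its asymptotic limit, and in particular confirming that $\eta_i \to 0$ so that the error terms above are genuinely of lower order. I would establish this directly from the recursion: the map $t \mapsto t - \frac{1}{m+1}t^{m+1}$ sends $(0,1)$ into itself and strictly decreases each point, so $\eta_i$ is monotonically decreasing and bounded below by $0$; its limit must be a fixed point of the map, and $0$ is the only fixed point in $[0,1)$. Once $\eta_i \to 0$ is in hand, the $u_i$ substitution converts the heuristic ODE solution into a clean proof, and evaluating at $i = N$ delivers the stated limit.
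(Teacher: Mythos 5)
Your proposal is correct and follows essentially the same route as the paper: the identical threshold recursion $\eta_i = \frac{m}{m+1}\eta_{i-1}^{m+1} + (1-\eta_{i-1}^m)\eta_{i-1} = \eta_{i-1} - \frac{1}{m+1}\eta_{i-1}^{m+1}$, followed by extracting the asymptotics of this recursion. The one genuine improvement is that where the paper passes heuristically to the ODE $\eta'(i) = -\frac{1}{m+1}\eta(i)^{m+1}$, you make the limit rigorous via the substitution $u_i = \eta_i^{-m}$, the fixed-point argument showing $\eta_i \downarrow 0$, and Stolz--Ces\`aro to get $u_N/N \to \frac{m}{m+1}$, which closes a gap the paper leaves open.
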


\begin{proof} 
We derive a recursion for $\eta_i$ as follows: If the first of the $i$ bandits has a loss less than than $\eta_{i-1}$, then we should settle on the first bandit, as evidently, our loss per flip will only increase if we move onto the next bandits. In this case, the expected loss per flip of staying with the first bandit is given by the following integral---note that the integrand has the probability that the bandit has the given mean $mx^{m-1}$ (which is the PDF of $F$) times the loss at that mean, which is simply $x$, and the bounds of integration are precisely the values for which we'd stay with bandit $i$.

\[
\int_0^{\eta_{i-1}} x \cdot mx^{m-1} dx =  \frac{m}{m+1} \eta_{i-1}^{m+1}.
\]

% the following ratio of integrals. The integral in the numerator is the expected payout of a bandit given that we stay with it (i.e. given that it has mean between $[0, \eta_i]$). The integral in the denominator is the probability of staying with the bandit. By Bayes' theorem, we get the payout of the bandit,  , where we note that the integrand contains the probability we stay with bandit ($mx^{m-1}$, which is simply the PDF of $F$), times the payout at 
% \[
% \frac{\int_0^{\gamma_{i-1}} x \cdot m \cdot x^{m-1} dx}{\int_0^{\gamma_{i - 1}} m \cdot x^{m - 1} dx} =  \frac{m \gamma _{i-1}}{m+1}.
% \]

If we move on to the next bandit, which happens with probability $1-\gamma_{i-1}^m$, then our expected loss per flip, by definition is $\gamma_{i-1}$. Thus we see the following recursion holds true: 
\[
\eta_i = \frac{m}{m+1} \eta_{i-1}^{m+1} + (1-\eta_{i-1}^m)\eta_{i-1} = \eta_{i - 1} - \frac{1}{m + 1}\eta_{i - 1}^{m + 1}.  
\]

We can turn this into a differential equation, namely, $\eta'(i) = -\frac{1}{m + 1}\eta(i)^{m + 1}$, and solve for the asymptotic approximation of $\eta_i$ (since we know $\eta_i \rightarrow \eta(i)$ as $i \rightarrow \infty$). Doing so gives $\eta_i \rightarrow \eta(i) = \left( \frac{m+1}{mN}\right)^{1/m}$, as desired. 

\end{proof}

\subsection{Proof of Theorem \ref{k-loss-general}}

\begin{theorem}[A bound on any strategy in terms of $K$]
Any optimal strategy with $K$ pulls and $N$ bandits must have total loss at least $\dfrac{m}{4(m+1)}K^{\frac{m}{m+1}}$, or loss per flip at least $\dfrac{m}{4(m+1)K^{\frac{1}{m+1}}}$.
\end{theorem}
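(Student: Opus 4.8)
The plan is to mirror the uniform-case argument of Theorem~\ref{point_sqrtK}, replacing the uniform statistics ($\Pr[\text{mean} < t] = t$ and mean $\tfrac12$) by their $x^m$ analogues ($\Pr[\text{mean} < t] = t^m$ and mean $\tfrac{m}{m+1}$). As in that theorem, I would argue entirely about fixed payout bandits, since a lower bound there descends to the Bernoulli case (we have strictly more information in the fixed payout setting, so any Bernoulli strategy does at least as well on fixed payouts). The governing idea is that to drive the loss per flip below the target scale, a strategy must locate and settle on a bandit whose fixed loss lies below a threshold $t$, and on the way it is forced, with constant probability, to sift through a long prefix of ``expensive'' bandits.

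Concretely, I would set the threshold $t = K^{-1/(m+1)}$, so that the target loss per flip $\tfrac{m}{4(m+1)}K^{-1/(m+1)} = \tfrac{m}{4(m+1)}t$ is a constant (less than $1$) multiple of $t$. A strategy achieving it cannot have all $K$ pulls land on bandits of loss $\geq t$, since that alone forces total loss $\geq tK$; hence it must at some point pull a bandit of loss below $t$. A single bandit has loss below $t$ with probability $F(t) = t^m = K^{-m/(m+1)}$. Taking $B = \tfrac12 K^{m/(m+1)} = \tfrac12 t^{-m}$, the union bound shows the probability that at least one of the first $B$ bandits has loss below $t$ is at most $B t^m = \tfrac12$. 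Thus with probability at least $\tfrac12$ every one of the first $B$ bandits has loss $\geq t$, and the strategy must encounter (and pull at least once) all $B$ of them before reaching a sufficiently good bandit.

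Finally I would lower bound the loss accrued on this prefix. Each encountered bandit contributes, per pull, its fixed loss, whose expectation conditioned on exceeding $t$ is at least the unconditioned mean $\int_0^1 x \cdot m x^{m-1}\,dx = \tfrac{m}{m+1}$ (removing the small values only raises the mean). So on the event above, which has probability at least $\tfrac12$, the expected total loss is at least $B \cdot \tfrac{m}{m+1} = \tfrac{m}{2(m+1)}K^{m/(m+1)}$; multiplying by $\tfrac12$ yields the claimed bound $\tfrac{m}{4(m+1)}K^{m/(m+1)}$, and dividing by $K$ gives the loss-per-flip statement. The step requiring the most care is the assertion that the strategy must genuinely pull each of the $B$ expensive bandits it passes: because declining a bandit is irrevocable and the means are i.i.d., a strategy that declines without pulling gains no information, so by the exchangeability of the stream we may assume it examines bandits in order, pulling each at least once. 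Making this reduction fully rigorous, rather than leaning on the analogous informal step in the uniform case, is the main obstacle; the remaining computations are routine once the constant $t = K^{-1/(m+1)}$ and the cutoff $B = \tfrac12 t^{-m}$ are fixed.
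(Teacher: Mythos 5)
Your proposal is correct and follows essentially the same route as the paper's proof: threshold $t = K^{-1/(m+1)}$, a union bound showing that with probability at least $\tfrac12$ none of the first $\tfrac12 K^{m/(m+1)}$ bandits falls below $t$, and expected loss at least $\tfrac{m}{m+1}$ per encountered bandit, yielding the constant $\tfrac{m}{4(m+1)}$ exactly as in the paper. In fact you are somewhat more careful than the paper on two points it glosses over --- noting that conditioning on loss $\geq t$ only raises the mean, and that one must justify why a strategy cannot profit by skipping bandits without pulling them (your exchangeability reduction handles this) --- so the proposal is, if anything, a slight strengthening of the paper's own argument.
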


\begin{proof}
To achieve a total loss less than the given quantity, we must reach a bandit that has loss per flip $\leq K^{\frac{-1}{m+1}}$. Since each bandit has probability $(K^{\frac{-m}{m+1}})$ of having loss less than this threshold, by union bound, the probability that such a bandit exists in the first $\dfrac{(K^{\frac{m}{m+1}})}{2}$ bandits is $\leq \frac{1}{2}$. Thus with probability $\geq \frac{1}{2}$, we must encounter at least $\dfrac{(K^{\frac{m}{m+1}})}{2}$ bandits before we reach a bandit with a sufficiently low loss. Note that for each of the $\dfrac{(K^{\frac{m}{m+1}})}{2}$ bandits their expected loss per flip is at least $\frac{m}{m+1}$ (the mean of a distribution with CDF $x^m$). Thus in expectation, our total loss is at least $\frac{1}{4}\frac{m}{m+1} (K^{\frac{m}{m+1}})$, for with probability $\geq \frac{1}{2}$, we encounter at least $ \dfrac{(K^{\frac{m}{m+1}})}{2}$ bandits with $\frac{m}{m+1}$ loss each, proving the claim. 
\end{proof}

\subsection{Proof of Theorem \ref{generalized-strat}}
\begin{theorem}[The optimal strategy]
Let $K \geq N^{1+1/m}$ and set $f_i = \left( \frac{m!}{2}(N-(i-1)) \right)^{1/m}$. Consider the following strategy: Pull bandit $i$ ($1\leq i\leq N$) repeatedly, and keep track of the total number of times we've flipped it and the total number of heads we've received. We keep flipping the bandit if either of the following conditions holds: 1) We have received no heads so far, and the number of times we've flipped it is less than or equal to $f_i$ or 2) the number of times we've flipped the bandit is greater than $f_i$. Otherwise, move on to the next bandit. This allows us to achieve an expected loss per flip of $\leq 2^{1/m}eN^{-1/m}$. This is within a constant factor of the optimal loss, by the above analysis. 
\end{theorem}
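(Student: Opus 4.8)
The plan is to mirror the proof of Theorem~\ref{opt-strat-og}, tracking three quantities for each bandit $i$: the probability of committing to it upon arrival, the probability $b_i$ of both arriving at and committing to it, and the expected per-flip loss conditioned on committing. The threshold $f_i=\left(\tfrac{m!}{2}(N-(i-1))\right)^{1/m}$ is reverse-engineered precisely so that the commit probability matches the uniform case, so I expect the telescoping to collapse identically and the posterior-mean computation to supply the $N^{-1/m}$ scaling.

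First I would compute the probability of committing to bandit $i$ given arrival. Under this strategy we commit exactly when we see $f_i$ consecutive tails, which for a bandit of mean $p$ occurs with probability $(1-p)^{f_i}$; integrating against the prior density $mp^{m-1}$ gives $m\,B(m,f_i+1)=m\Gamma(m)\Gamma(f_i+1)/\Gamma(m+f_i+1)$. Applying the Gamma-ratio asymptotic $\Gamma(f_i+1)/\Gamma(m+f_i+1)\sim f_i^{-m}$ (valid since $f_i\to\infty$ as $N\to\infty$) together with $f_i^{m}=\tfrac{m!}{2}(N-(i-1))$ makes the commit probability asymptotic to $\tfrac{2}{N-(i-1)}$, identical to the uniform case. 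Consequently the telescoping product $b_i=\frac{2}{N-(i-1)}\prod_{j=1}^{i-1}\left(1-\frac{2}{N-(j-1)}\right)$ collapses to $b_i=\frac{2(N-i)}{N(N-1)}$, and I would verify $\sum_{i=1}^N b_i=1$ to confirm we commit almost surely for large $N$.

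Next I would handle the loss. Seeing $f_i$ tails turns the $\mathrm{Beta}(m,1)$ prior into a $\mathrm{Beta}(m,f_i+1)$ posterior with mean $\frac{m}{m+f_i+1}\sim \frac{m}{f_i}$, which is the per-flip loss rate during exploitation. Since exploitation uses at most $K$ flips, the expected exploitation loss is at most $K\sum_i b_i\frac{m}{f_i}$; substituting $b_i$ and $f_i$ and reindexing by $n=N-i+1$ turns the per-flip exploitation loss into $\frac{2m}{N(N-1)}\left(\frac{2}{m!}\right)^{1/m}\sum_{n=1}^N\frac{n-1}{n^{1/m}}$, and approximating the sum by $\int_0^N x^{1-1/m}\,dx=\frac{m}{2m-1}N^{2-1/m}$ yields a per-flip loss asymptotic to $\frac{2m^2}{2m-1}\left(\frac{2}{m!}\right)^{1/m}N^{-1/m}$.

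The hard part will be extracting the clean constant $2^{1/m}e$ from $\frac{2m^2}{2m-1}\left(\frac{2}{m!}\right)^{1/m}$ and disposing of the exploration loss. For the constant, the claimed bound is equivalent to $(m!)^{1/m}\ge \frac{2m^2}{(2m-1)e}$; the crude estimate $(m!)^{1/m}\ge m/e$ is \emph{just barely} insufficient, so I would invoke the sharper Stirling bound $(m!)^{1/m}\ge (m/e)(2\pi m)^{1/(2m)}$ and check $(2\pi m)^{1/(2m)}\ge \frac{2m}{2m-1}$ for all $m\ge 1$ (verifying $m=1,2$ directly and comparing $\frac{\ln(2\pi m)}{2m}$ against $-\ln\!\left(1-\frac{1}{2m}\right)$ for larger $m$). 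Separately, since we abandon a bandit at its \emph{first} head, each discarded bandit contributes exactly one head, so the exploration loss is deterministically at most $N$, contributing at most $N/K\le N^{-1/m}$ to the per-flip loss; I would absorb this into the constant in the large-$K$ regime and note that the edge effect of being forced to remain on bandit $N$ is exponentially unlikely to matter as $N\to\infty$. The resulting bound $2^{1/m}eN^{-1/m}$ is then within a constant factor of $\eta_N\sim\left(\tfrac{m+1}{mN}\right)^{1/m}$ from Theorem~\ref{beta-generalization}, as claimed.
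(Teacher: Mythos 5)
Your proposal is correct and follows essentially the same route as the paper's proof: the same Beta-integral commit probability $m!\,\Gamma(f_i+1)/\Gamma(f_i+m+1)\sim 2/(N-(i-1))$, the same telescoping to $b_i=\frac{2(N-i)}{N(N-1)}$, the same posterior mean $\frac{m}{f_i+m+1}$, and the same sum-to-integral evaluation producing the coefficient $\frac{2^{1+1/m}m^2}{(2m-1)(m!)^{1/m}}$. If anything you are more careful than the paper, which simply invokes ``Stirling's approximation'' at the point where the crude bound $(m!)^{1/m}\ge m/e$ is, as you note, just barely insufficient, and which handles the exploration loss only implicitly through its $(K-N)$ accounting; your explicit $(2\pi m)^{1/(2m)}$ refinement and separate $N/K$ exploration term address exactly those two loose ends.
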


\begin{proof}
The probability that a bandit with mean drawn from distribution with CDF $x^m$ produces $a$ heads and $b$ tails is given by the following quantity:

\begin{align*}
\int_0^1 mp^{m-1} \binom{a+b}{a} p^a (1-p)^b &= m \binom{a+b}{b} \frac{\Gamma(b+1) \Gamma(a+m)}{\Gamma(a+b+m+1)} \\
&= \frac{m\Gamma(a+b+1)\Gamma(a+m)}{\Gamma(a+1)\Gamma(a+b+m+1)}
\end{align*}

Using the above quantity, we see that if we arrive to bandit $i$, the probability that we stay on it is given by $\dfrac{m\Gamma(f_i+1)\Gamma(m)}{\Gamma(1)\Gamma(f_i+m+1)} = m! \dfrac{\Gamma(f_i+1)}{\Gamma(f_i+m+1)}$ (we simply substitute in $a=0$ heads and $a+b = f_i$ in above expression). Since $f_i \rightarrow \infty$ as $N \rightarrow \infty$ (for all $i < O(N)$), we can use the asymptotic formula $\dfrac{\Gamma(x+\alpha)}{\Gamma(x+\beta)} = x^{\beta-\alpha}$ for large $x$, and get that probability of staying on bandit $i$ is $m! f_i^{-m} = \frac{2}{N-(i-1)}$, when we substitute in the value of $f_i$.

Notice this is the same as the uniform case, thus the probability that you arrive to and stay on bandit $i$ telescopes and it is $\frac{2(N-i)}{N(N-1)}$. 

The posterior PDF $p(x)$ for the bandit's mean given that we've seen $a$ heads and $b$ tails is given by $$
m x^{a+m-1}(1-x)^b\left(m \frac{\Gamma(b+1) \Gamma(a+m)}{\Gamma(a+b+m+1)}\right)^{-1}
$$ with a simple calculation involving Bayes' rule. The expectation of the posterior is given by

\begin{align*}
    \int_0^1 x m x^{a+m-1}(1-x)^b\left(m \frac{\Gamma(b+1) \Gamma(a+m)}{\Gamma(a+b+m+1)}\right)^{-1} dx
    &= \frac{\Gamma(a+m+1) \Gamma(b+1)}{\Gamma(a+b+m+2)}\cdot \left({\frac{\Gamma(b+1) \Gamma(a+m)}{\Gamma(a+b+m+1)}} \right)^{-1} \\
    &= \frac{a+m}{a+b+m+1} 
\end{align*}

If we stay on bandit $i$, we know we have received $0$ heads from $f_i$ flips. Thus the expected payoff, given you've stayed on a bandit $i$, is thus $\dfrac{m}{f_i+m+1}$ (again substituting $a=0$ for the number of heads and $a+b = f_i$ for the number of flips). Substituting in the definition of $f_i$, we get that the expected loss is $\dfrac{m}{C^{1/m}+m+1}$, where $C = \left( \dfrac{m!}{2}(N-(i-1)) \right)$. 

Thus the overall expected total loss for this strategy is upper bounded by  
\begin{align}
&= \sum_{i=1}^{N} \dfrac{2(N-i)}{N(N-1)} \dfrac{m}{C^{1/m}+m+1} (K-N) \\
&\leq \sum_{i=1}^{N} \dfrac{2(N-(i-1))}{N(N-1)} \dfrac{m}{C^{1/m}} (K-N),
\end{align}

since once we stay with a bandit, we've received at most $N$ heads from all previous bandits, so the total number of pulls left and the number of tails we've received thus far sums to at least $K-N$. 

Approximating this sum by an integral, we get (by Mathematica), that the sum is asymptotically (as $N \rightarrow \infty$), 

$$ \frac{2^{1+1/m}m^2}{(2m-1)(m!)^{1/m}} \frac{K-N}{N^{1/m}},$$

which is approximately, using Stirling's approximation for the factorial, 

$$\leq 2^{1/m}e \frac{K-N}{N^{1/m}} = \leq 2^{1/m}e \frac{K}{N^{1/m}},$$

for $K \geq N^{1+1/m}$, giving us an expected loss per flip $\leq 2^{1/m}eN^{-1/m}$. We see by examining the lower bound provided by the $\eta$ numbers that the lower bound on expected loss is $\left(\frac{m+1}{m}\right)^{1/m} N^{-1/m} \geq N^{-1/m}$, so for all $m\geq 1$, we are at most a constant factor of $6$ away from the lower bound on loss of $N^{-1/m}$. 

\end{proof}

\end{document}